\newcommand{\pr}{\mathrm{\texttt{pr}}}
\newcommand{\zero}{\mathrm{zero}}
\newcommand{\relu}{\mathrm{relu}}
\newcommand{\crit}{\mathrm{crit}}
\newcommand{\conv}{\mathrm{conv}}
\newcommand{\cl}{\mathrm{cl}\,}
\newcommand{\Sing}{\mathrm{Sing}}
\newcommand{\PP}{\mathcal{P}}
\newcommand{\SSS}{\mathcal{S}}
\newcommand{\RR}{\mathbb{R}}
\newcommand{\NN}{\mathbb{N}}
\newcommand{\A}{\mathcal{A}}
\newcommand{\FF}{\mathcal{F}}
\newcommand{\selgrad}{\widehat\nabla}
\newcommand{\seljac}{\widehat J}
\newtheorem{theorem}{Theorem}
\newtheorem{lemma}{Lemma}
\newtheorem{proposition}{Proposition}
\newtheorem{definition}{Definition}
\newtheorem{remark}{Remark}
\newtheorem{claim}{Claim}
\newenvironment{proof}[1][]{\noindent {\bf Proof #1:\;}}{\hfill $\Box$}
\newcommand{\grad}{\mathrm{grad}\,}
\newcommand{\R}{\mathbb{R}}
\title{A mathematical model for automatic differentiation in machine learning}
\begin{document}
\author{%
   J\'er\^ome Bolte\thanks{Authors in alphabetical order.}\\
   Toulouse School of Economics \\
   Univ. Toulouse\\
   Toulouse, France
   \And
   Edouard Pauwels\\
    IRIT, CNRS\\
   Univ. Toulouse\\
   Toulouse, France
}
\date{\today}

\maketitle

\begin{abstract}
Automatic differentiation, as implemented today, does not have a simple mathematical model adapted to the needs of modern machine learning. In this work we articulate the relationships between differentiation of programs as implemented in practice and differentiation of nonsmooth functions. To this end we provide a simple class of functions, a nonsmooth calculus, and show how they apply to stochastic approximation methods. We also evidence the issue of artificial critical points created by algorithmic differentiation and show how usual methods avoid these points with 
probability one. 
\end{abstract}

%\keywords{Automatic differentiation, Backpropagation, Nonsmooth optimization, Stochastic gradient, Clarke subdifferential, First order methods, Deep Learning}\\

\section{Introduction}

Optimization algorithms based on backpropagation oracles, and more generally on automatic or algorithmic differentiation (AD) \cite{speelpenning1980compiling,rumelhart1986learning}, are one of the most widely used training tools for modern learning architectures \cite{bottou2008tradeoffs,lecun2015deep,bottou2018optimization,chizat,davis2018stochastic,baydin2018automatic,castera2019inertial}. They often rely on popular  numerical implementations as TensorFlow or PyTorch \cite{abadi2016tensorflow,paszke2017workshop}. However, for nonsmooth, nonconvex losses, AD does not have  a stable theory  \cite{griewank2008evaluating,griewank2013stable,griewank2016lipschitz,barton2018computationally,kakade2018provably,griewank2019treating,griewank2020beyond,bolte2020conservative}, matching the actual practice. We wish to present a simple mathematical framework addressing this issue. Let us progressively explain our approach.

\subsection{What is backpropagation?}
\paragraph{Algorithmic differentiation acts on programs not on functions:}
\label{sec:autodiffOnPrograms}
To convey this fact we carry out a small experiment in TensorFlow \cite{abadi2016tensorflow} with the function $\relu \colon t \mapsto \max\{0,t\}$, see Appendix \ref{sec:suppRelu} for implementation details. Algorithmic differentiation is displayed in Figure \ref{fig:illustrAutodiff}, in particular, we have $\relu'(0) = 0$. Consider the two functions
\begin{align*}
    \relu_2 \colon t \mapsto \relu(-t) + t,\qquad \qquad \relu_3 \colon t \mapsto \frac{1}{2}(\relu(t) + \relu_2(t)).
\end{align*}
As mathematical functions on $\RR$ these are {\em equal to} $\relu$. However TensorFlow returns $\relu_2'(0) = 1$ and $\relu_3'(0)=1/2$ (Figure \ref{fig:illustrAutodiff}). Indeed, AD does not act on functions, but on their representations, i.e., on  programs. Different programs implementing the same function may provide different results, beyond numerical precision; we refer to this as the spurious behaviour of AD for nonsmooth functions\footnote{
The validity domain of AD is restricted in theory to smooth functions \cite{griewank2008evaluating}, yet it is  common practice  to use it for nonsmooth functions.}.
Let us explore this phenomenon further. The function $\zero \colon t \mapsto \relu_2(t) - \relu(t)$, outputs constantly $0$ but AD gives $\zero'(0) = 1$. More generally, one can modify the value of the derivative of a given function at prescribed arguments (Figure \ref{fig:illustrAutodiff}). This may generate artificial critical points; for instance  $x\to x-\zero$ is the identity but its derivative at $0$ according to AD is $0$. 

This discussion was limited to univariate functions, but these pathologies grow in size and in complexity when occurring in higher dimensions. Besides, as the ``compositional depth" of functions increases the phenomenon gets more complex, making the geometry of artificial point difficult to grasp.
\begin{figure}[t]
    \centering
    \includegraphics[width=.9\textwidth]{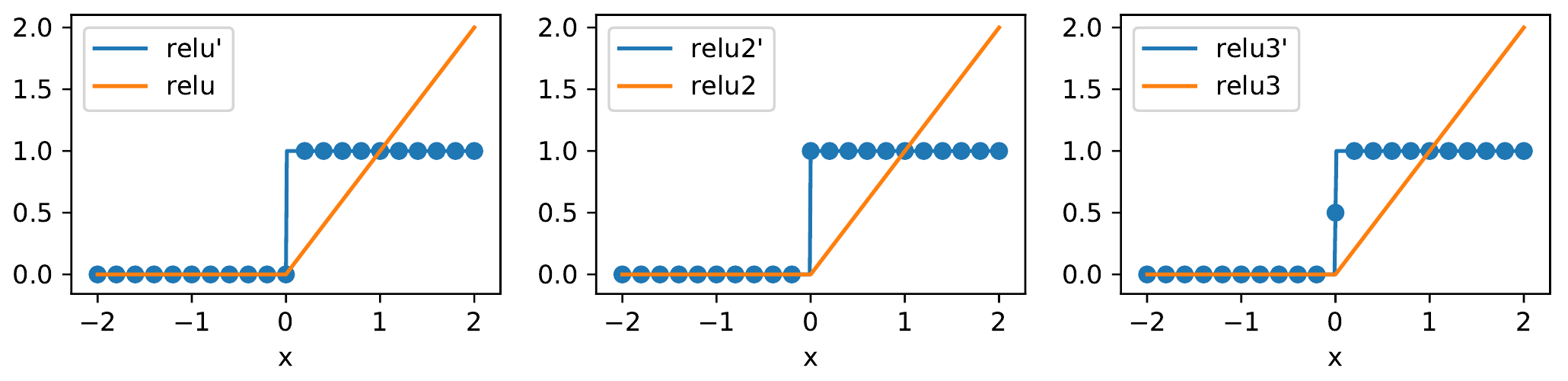}\\
    \includegraphics[width=.9\textwidth]{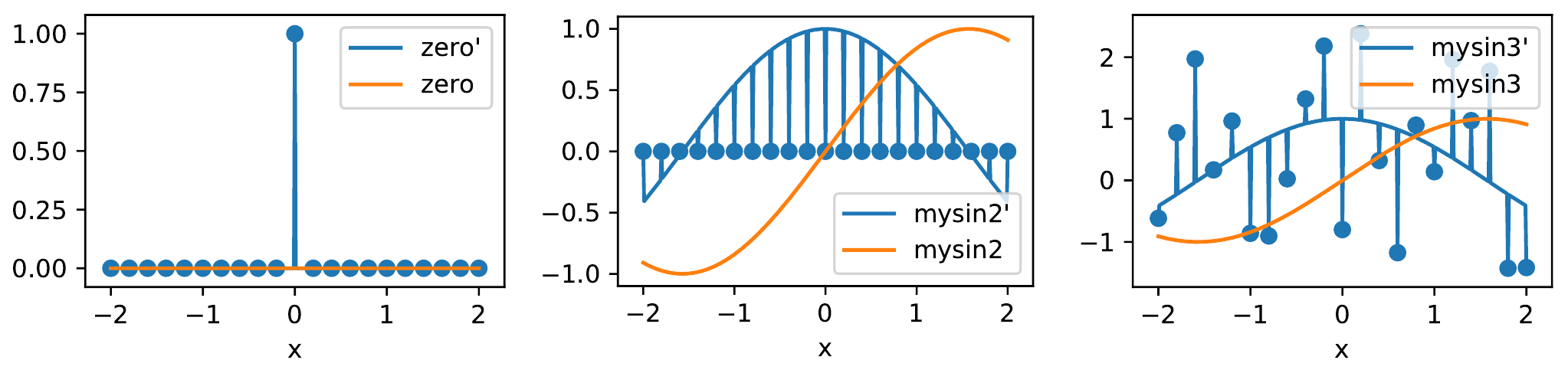}
    \caption{Top: AD applied to $\relu$ and two different implementations of the same function. Bottom: Algorithmic differentiation of a constant function, creation of artificial critical point or arbitrary derivatives at prescribed arguments for the sine function.}
    \label{fig:illustrAutodiff}
\end{figure}

\paragraph{Canonical surjection between programs functions:}
Numerical programs combine basic mathematical functions within an algorithm and return an output. This can be understood in two ways:
\begin{itemize}\itemsep-.1em
    \item Computer science: it is a sequence of instructions with numerical inputs-outputs,
    \item Mathematics: the program is a function\footnote{In the usual mathematical sense.} of its arguments.
\end{itemize}
It is tempting to identify both, but functions can be represented by different programs. This defines a surjection $\mathcal{F}$ mapping a program to a function (in the class of functions ``accessible through coding"). 

\paragraph{Algorithmic differentiation:} As presented above, AD is an operation on programs, $\mathcal{A}$ which takes as argument a program and returns a program with the same input variables. This operation can be ``pushed'' to the space of functions using the canonical surjection $\mathcal{F}$. Remar\-ka\-bly, if we restrict ourselves to programs $\mathcal{P}$ which only smoothly combine smooth functions, then we have the following fundamental relation, depicted in Figure \ref{fig:diagrams}:
\begin{align}
    \mathcal{F} (\mathcal{A}(\mathcal{P})) = \nabla \mathcal{F}(\mathcal{P}).
    \label{eq:fundamentalRelation}
\end{align}
In other words, algorithmic differentiation of a program which smoothly combines smooth functions, is equivalent, through the canonical surjection, to derivation.

\begin{figure}[H]
    \center
    \includegraphics[width=\textwidth]{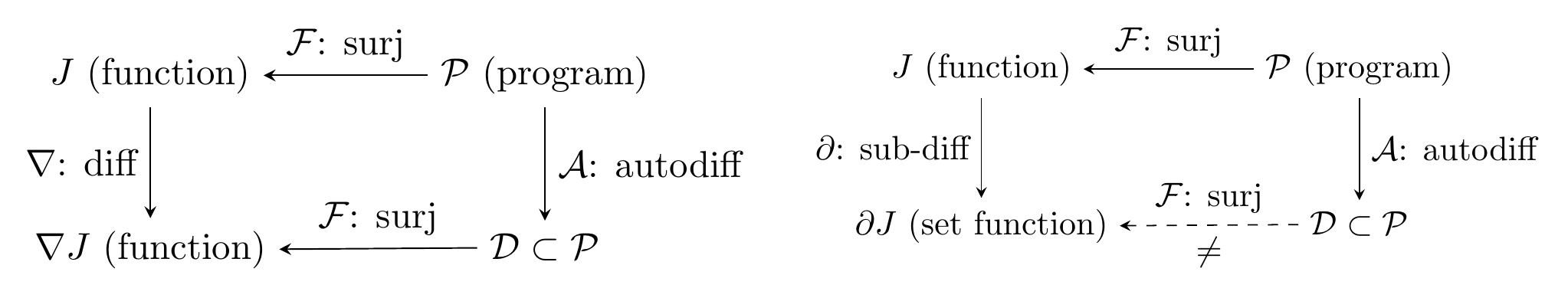}
    \caption{Left: Algorithmic differentiation applied to programs combining smooth functions in a smooth way, the diagram commutes. Right: Algorithmic differentiation in nonsmooth settings, connection with known notion of generalized derivative is much less clear.}
    \label{fig:diagrams}
\end{figure}

However practitioners use AD and backpropagation  beyond smooth programs with nonsmooth elementary functions or program branching for instance. Can we find a proper operational  interpretation of this widespread practice?

\paragraph{Algorithmic differentiation cannot be represented through a variational operator}
At first, it is tempting to simply use AD to induce a differential operator on functions generalizing classical differentiation. This operator, say $\partial^{A}$, should:
\begin{itemize}
    \item[(a)] encompass the outputs of algorithmic differentation for all functions
    \item[(b)] be such that $0$ is an element of $\partial^{A}(\relu)$ at $0$. 
\end{itemize}
Unfortunately such an operator does not exist:
\begin{theorem}[Algorithmic differentiation does not induce an operator on functions]\label{th:noOperator}
There is no nontrivial operator on functions satisfying $(a)$ and $(b)$.
\end{theorem}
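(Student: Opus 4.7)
The plan is to show that condition $(a)$ alone already forces $\partial^A(f)(x) = \RR$ for every function $f$ and every point $x$, so that any operator compatible with $(a)$ is the trivial set-valued map returning the whole real line; condition $(b)$ is then automatic and provides no extra constraint. In other words, I would not try to derive a contradiction between $(a)$ and $(b)$ directly, but rather show that $(a)$ leaves only the ``useless'' operator, which is what ``no nontrivial operator'' should mean.

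The first step uses the construction already present in the excerpt. The program $t \mapsto \relu_2(t) - \relu(t)$ computes the constantly-zero function $\zero_{\text{fn}}$ on $\RR$, yet AD returns the derivative $1$ at $0$. By precomposing with a scaling or, equivalently, considering the program $t \mapsto c \cdot (\relu_2(t) - \relu(t))$ for any $c \in \RR$, one obtains a family of programs all representing the same function $\zero_{\text{fn}}$ but whose AD outputs at $0$ range over all of $\RR$. Invoking $(a)$, the set $\partial^A(\zero_{\text{fn}})(0)$ must contain every real number, hence equals $\RR$.

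The second step promotes this arbitrariness to any function at any point. Given $f$ and $x_0$, fix any program $Q$ computing $f$, with AD output $v$ at $x_0$, and define
\begin{equation*}
P_c(y) \;=\; Q(y) \;+\; c\cdot\bigl(\relu_2(y - x_0) - \relu(y - x_0)\bigr), \qquad c \in \RR.
\end{equation*}
Since the second summand computes $\zero_{\text{fn}}$, each $P_c$ represents the same function $f$. Algorithmic differentiation, being compatible with sums and with the chain rule on the smooth pieces used here, outputs $v + c$ at $x_0$. Applying $(a)$ to the family $\{P_c\}_{c\in\RR}$ gives $\partial^A(f)(x_0) \supseteq \RR$, and therefore $\partial^A(f)(x_0) = \RR$ for every $f$ and every $x_0$.

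The main obstacle is conceptual rather than computational: one must pin down precisely what is meant by ``operator on functions'' and ``nontrivial'' so that the conclusion $\partial^A \equiv \RR$ genuinely contradicts nontriviality. Once the formulation identifies programs and functions through the surjection $\mathcal{F}$ and interprets $(a)$ as ``for every program $P$, the AD output at $x$ lies in $\partial^A(\mathcal{F}(P))(x)$'', the rest is essentially the two-line argument above. The $\relu$-specific condition $(b)$ plays no role beyond ruling out the empty operator, underscoring that the obstruction comes entirely from the program-dependence of AD on nonsmooth representations.
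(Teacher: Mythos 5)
Your proposal is correct and follows essentially the same route as the paper: the paper's proof also shows that condition $(a)$ alone forces $\partial^A f(x_0)=\RR$ for every $f$ and $x_0$, using the perturbation $f_{r,s}=f+r\,\zero(\cdot-s)$ with $\zero=\relu_2-\relu$, which is exactly your family $P_c$ with $r=c$ and $s=x_0$. Your closing observation that $(b)$ contributes nothing beyond excluding degenerate operators likewise matches the structure of the paper's argument.
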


\subsection{Contribution and related work} 
We address  this impossibility result and provide a class of functions together with an operational nonsmooth differential calculus which is able to cope with spurious behaviours. 

\paragraph{Elementary selections and selection derivatives:} We introduce a new  class of  nonsmooth nonconvex functions, encompassing most objective functions met in machine learning, having appealing stability properties. This allows us to define simple  differential objects called selection derivatives. Selection derivatives turn out to have an operational calculus adapted to the analysis of many learning methods, as backpropagation or stochastic first order methods. They thus provide an operational model to  capture nonsmooth AD as implemented in current numerical software.

\paragraph{Algorithmic differentiation, algorithms} This framework allows to formalize properly the relationships between, functions, algorithmic differentiation and capture the corresponding notion of critical points as met in practice. These  characterize the set of attractors (limit points) for  stochastic approximation algorithms based on nonsmooth backpropagation  \cite{robbins1951stochastic,benaim1999dynamics,kushner20003stochastic,benaim2005stochastic,borkar2009stochastic}.  
It is important to stress that these attractors, which  models sharply the whole scope of AD-induced  stationnarity,  are different from the traditional notions as Clarke criticality   \cite{clarke1983optimization,rockafellar1998Variational,davis2018stochastic}. 
This is described in Theorems~\ref{th:tensorflowConservativeField} and \ref{th:convergenceGrad}.

\paragraph{Avoidance of traps:} As sketched above and in the introduction AD produces artificial critical points, i.e. stationary points which are not Clarke critical. These points have a parasitic nature which could be detrimental to training purposes, were they met in practice. We show that randomly initialized mini-batch stochastic gradient method do not lead to  artificial critical points (Theorem~\ref{th:trapAvoidance}). This result applies to modern machine learning software libraries based on AD \cite{abadi2016tensorflow,paszke2017workshop}, seen as performing operation over the reals, without any modification. Although AD may have unpredictable behavior in nonsmooth contexts, both theoretically and numerically, this result justifies theoretically that the practical impact is somewhat negligible in the context of common machine learning usage.

\paragraph{Related work:} Spurious behaviour of AD in nonsmooth context has been investigated in \cite{griewank2008evaluating,griewank2013stable,griewank2016lipschitz,griewank2016first,barton2018computationally,kakade2018provably,bolte2020conservative}. In particular, \cite{griewank2016first,kakade2018provably} considers qualification conditions allowing to construct AD algorithms which compute proper Clarke subgradients  \cite{clarke1983optimization,rockafellar1998Variational,davis2018stochastic}.  However qualification is extremely hard to check and almost impossible to enforce in practice. %So for the moment, it cannot be used to prevent false outputs to be met in numerical practice. 
Let us also mention \cite{barton2018computationally} which uses the notion of lexicographic derivatives, but, at this day, algorithmic computations are limited to forward mode for the moment which is of little use in machine learning.

\cite{griewank2008evaluating,griewank2013stable,griewank2016first,griewank2016lipschitz,griewank2019treating,griewank2020beyond} use settings closer to ours. Piecewise 
smooth functions, selection derivatives and their variational properties are extensively described in \cite{scholtes2012introduction}. Our approach differs because we adopt more stringent definitions and rigidity assumptions, which allows in turn for much stronger properties.  For instance, we fully treat backward algorithmic differentiation which is the most useful tool in machine learning.
 
Altogether, our contribution is an accessible and elementary framework for the conservative fields recently introduced in \cite{bolte2020conservative}, without explicitly requiring the introduction of semialgebraic geometry and o-minimal structures \cite{dries1996geometric,Cos99}.

Stochastic approximation algorithms \cite{robbins1951stochastic,benaim1999dynamics,kushner20003stochastic,benaim2005stochastic,borkar2009stochastic} are widely used in machine learning contexts \cite{rumelhart1986learning,bottou2008tradeoffs,moulines2011nonasymptotic,lecun2015deep,bottou2018optimization,chizat,castera2019inertial}. For example \cite{davis2018stochastic} describes asymptotics of stochastic subgradient algorithms in nonsmooth, nonconvex settings. In contrast, we do not assume access to subgradients and instead explicitly model the behaviour of AD in optimization contexts. Our convergence results are based on \cite{bolte2020conservative}, complemented by a new result on 
``the avoidance of critical traps'' in the line of \cite{bianchi2020convergence} in the context of long run convergence.

\noindent
{\bf Notations} The ambient space is Euclidean $\RR^p$. For each $k$, $e_k$ is the $k$-th vector of the canonical basis. We use $D \colon \RR^m \rightrightarrows\RR^q$ for set valued functions, \textit{i.e} functions from $\RR^m$ to the subsets of $\RR^q$. The convex hull of $A \subset \RR^p$ is denoted by $\mathrm{conv}(A)$. All proofs are postponed to the Appendix.
\section{Basic piecewise differentiable functions and selection gradient}
\label{sec:basicPDfunction}
We introduce a simple but vast class of functions that model rigorously the  machine learning models and losses for applications such as deep learning.
\begin{definition}[Elementary (log-exp) functions]{\rm 
				{\em Elementary (log-exp) functions}  are functions on $\RR^p$ described by a finite compositional expression involving basic operations, $+,-,\times, /$ as well as affine mappings, exponential and logarithms, inside their domain of definition.  We denote by $\mathcal{E}$ the set of elementary functions in any dimension~$p$.
				\label{def:elemFunction}}
\end{definition}
Examples include polynomials, logistic loss, boosting loss, Gaussian likelihood.
Observe that  the corresponding functions are  $C^\infty$ smooth on their open domains. Note also that if log and exp are not present we obtain the field of rational functions. See Remark~\ref{r:elementary} in Appendix \ref{sec:auxiliaryResults}.

\begin{definition}[Elementary  index]
				$s \colon \RR^p \mapsto \left\{ 1,\ldots, m \right\}$ is an elementary (log-exp) index if the set $\left\{ x \in \RR^p,\, s(x) = i  \right\}$ is the solution set of a finite number of inequalities and equalities involving elementary functions on $\RR^p$. 
				The set of such functions is denoted by  $\mathcal I$  (for any input dimensions $p$).
				\label{def:elemIndex}
\end{definition}
\textbf{Examples:} The Heaviside function, the index of the largest or $k$-th largest element in a vector, the sign pattern of a vector in $\RR^p$ which is indexed by integers from $1$ to $2^p$.

\begin{definition}[Elementary  selection]{\rm
    Let $f \colon \RR^p \mapsto \RR$ be continuous. We say that $f$ has an {\em elementary (log-exp)  selection} $(s, f_1, \ldots, f_m)$ if $s \colon \RR^p \mapsto \left(1,\ldots, m\right)$ is an elementary index in $\mathcal{I}$ and for $i = 1 \ldots, m$, $f_i\colon \RR^p \mapsto \RR$ are elementary functions in $\mathcal{E}$, such that for all $x \in \RR^p$,
    \begin{align}
        f(x) = f_{s(x)}(x).
        \label{eq:C1Selection}
    \end{align}
    The $m+1$-uplet $(s, f_1, \ldots, f_m)$ is a {\em representation} of $f$, and $f$ admits an {\em elementary (log-exp) selection}.  
		The class of such functions is denoted by $\SSS_{\log\exp}$ or simply here $\SSS$. This extends to functions from $\RR^p$ to $\RR^m$ by applying a coordinatewise definition with a common elementary index.
    \label{def:C1selection}}
\end{definition}
Observe that the representation is {\em never} unique, both in $s$ and in the sequence $f_1,\ldots,f_m$.  The ReLU, hinge loss, maximal entry,  $k$-th largest entry functions are elementary selections. Note also that continuity is part of the definition.

\begin{proposition}[Stability of $\SSS$ by $\circ,+,\times$]\label{p:stability}
             The class $\SSS$ of elementary selections is stable by composition, sum and product.
\end{proposition}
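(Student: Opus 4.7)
My plan is to fix representations $(s, f_1, \ldots, f_m)$ of $f$ and $(t, g_1, \ldots, g_n)$ of $g$ and, for each operation, construct a combined index together with the appropriate elementary selections. Continuity of sums, products, and compositions of continuous functions is automatic, so the work reduces to exhibiting a valid representation in the sense of Definition~\ref{def:C1selection}.

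\textbf{Sums and products.} I would define the product index $u(x) = (s(x)-1)\,n + t(x) \in \{1,\ldots,mn\}$. For $k = (i-1)n + j$, the level set $\{u = k\} = \{s = i\} \cap \{t = j\}$ is the conjunction of the two finite families of elementary (in)equalities defining $\{s = i\}$ and $\{t = j\}$, so $u \in \mathcal{I}$. On $\{u = k\}$ one has $(f+g)(x) = f_i(x) + g_j(x)$ and $(f g)(x) = f_i(x) g_j(x)$; since $\mathcal{E}$ is closed under $+$ and $\times$ by construction, the tuples $(u, f_i+g_j)_{i,j}$ and $(u, f_i g_j)_{i,j}$ give valid representations.

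\textbf{Composition.} For $g \circ f$ with $f \colon \RR^p \to \RR^q$ (using the vector extension of Definition~\ref{def:C1selection}) and $g \colon \RR^q \to \RR$, I would use
\begin{equation*}
u(x) \;=\; (s(x)-1)\,n + t\bigl(f(x)\bigr).
\end{equation*}
The main step is verifying $u \in \mathcal{I}$. Fix $k=(i-1)n+j$; on $\{s = i\}$ the vector-valued selection yields $f(x) = f_i(x)$, hence
\begin{equation*}
\{u = k\} \;=\; \{s = i\} \cap \{t \circ f_i = j\}.
\end{equation*}
Now $\{t = j\} \subset \RR^q$ is cut out by a finite family of (in)equalities $h_\ell(y) \star_\ell 0$ with $h_\ell \in \mathcal{E}$ and $\star_\ell \in \{=, \leq, <\}$; substituting $y = f_i(x)$ gives $\{x : h_\ell(f_i(x)) \star_\ell 0\}$. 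Because $\mathcal{E}$ consists of finite compositional expressions in $+,-,\times,/,\exp,\log$ and affine maps, each $h_\ell \circ f_i$ belongs to $\mathcal{E}$, so $\{u = k\}$ is elementarily definable. On $\{u = k\}$ one has $(g \circ f)(x) = g_j(f_i(x))$, and $g_j \circ f_i \in \mathcal{E}$ by the same closure argument, giving a valid representation.

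\textbf{Main obstacle.} The delicate point is the composition case, where $f$ is not itself in $\mathcal{E}$ but only piecewise so. The trick is branch-localization: restricting to $\{s = i\}$ substitutes the truly elementary $f_i$ for $f$, unlocking closure of $\mathcal{E}$ under composition both for defining the index and for expressing the selections. A secondary subtlety is that $g_j \circ f_i$ and $h_\ell \circ f_i$ need to be well-defined on the relevant level sets, but this follows from the fact that $g(y) = g_j(y)$ exactly when $t(y) = j$, so $f_i(x)$ lies in the natural domain of $g_j$ whenever $t(f_i(x)) = j$.
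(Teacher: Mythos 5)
Your proof is correct and follows essentially the same route as the paper's: branch-localization onto $\{s=i\}$ so that the elementary piece $f_i$ can be substituted for $f$, the observation that composing the defining (in)equalities of $\{t=j\}$ with $f_i$ shows $t\circ f_i$ is again an elementary index (this is exactly the paper's Lemma~\ref{lem:elementaryIndex}), and re-indexing the pairs $(i,j)$ to obtain the new selection; your explicit product-index treatment of sums and products matches what the paper dismisses as ``similar.'' No substantive differences to report.
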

The class $\SSS$ is close to the one of piecewise $C^k$ functions, see e.g \cite{scholtes2012introduction}, but it is also much more disciplined since indices and functions are required to satisfy strong ``log-exp" rigidity assumptions.

\subsection{Selection derivative}
Functions in $\mathcal{S}$ can be associated with a flexible notion of generalized derivative based on the selection structure of the underlying function.
\begin{definition}[Selection gradient]{\rm
				(i) Let $f \colon \RR^p \mapsto \RR$,  in $\mathcal{S}$ with selection $(s, f_1, \ldots, f_m)$. We set the {\em selection derivative of $f$ with respect to $s$} to be
				\begin{align}\label{e:deriv}
								\selgrad^s f \colon x \mapsto \nabla f_{s(x)}(x).
				\end{align}
				This extends to multivariate outputs by applying the definition coordinatewise, which leads to a notion of a {\em selection Jacobian} denoted by $\seljac^s$.\\
				(ii) Given a function $f\in \SSS$, a {\em selection derivative} is a derivative of the form \eqref{e:deriv} for a given representation. In that case a selection derivative of $f$ is merely denoted by $\selgrad f$.
				\label{def:derivative}}
			\end{definition}

\textbf{Example:} Set for all $x \in \RR$, $f_1(x) = 0$, $f_2(x) = x$ and $s(x) = 1$ for $x \leq 0$ and $s(x) = 2$ for $x>0$. This this defines the $\relu$ function and its selection derivative at $0$ is $0$. See more in Appendix \ref{sec:suppRelu}.

\textbf{Remark:} (a) $\selgrad f$ is different from any known notion of subgradient. Set for all $x \in \RR$, $f_1(x) = 0$, $f_2(x) = x$ and $s(x) = 1$ for $x \neq 0$ and $s(0) = 2$. This defines a  elementary selection  for the null function however, $\selgrad^s f(0) = 1$. This is the $\zero$ function of the introduction.\\
%(b) Considering the set of all selections derivatives of a given function ends up in destructing all valuable information, see the proof of Theorem~\ref{th:noOperator}.\\
(b) This formalizes what one would obtained by differentiating a code with all decision branches frozen and hence represents the numerical output of AD (see \ref{sec:autodiff}). Note that one  only needs one branch and do not need to explore all possible outcomes, avoiding combinatorial explosion.

The properties of selection derivatives might seem too liberal at first sight and too disconnected from the original function, but this is not the case as shown below.
\begin{proposition}[Integration along segments]
				Let $f \colon  \RR^p \mapsto \RR$ be in $\SSS$, with elementary selection $(s, f_1, \ldots, f_m)$. Then $f$ is locally Lipschitz and for all $y,x$ in $\RR^p$.:
				\begin{align*}
										f(y)-f(x)=	\int_{0}^1 \left\langle y - x, \selgrad^s f(x + t(y-x)) \right\rangle dt
				\end{align*}
				
				\label{prop:SCintegration}
\end{proposition}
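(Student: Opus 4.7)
The strategy is to reduce the identity to a one-dimensional statement along the segment $[x, y]$ and apply the fundamental theorem of calculus on finitely many subintervals on which the selector $s$ is constant.

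I parametrize the segment by $\gamma(t) = x + t(y - x)$ for $t \in [0,1]$ and set $g(t) = f(\gamma(t))$, $\sigma(t) = s(\gamma(t))$. For each $i \in \{1, \dots, m\}$ the set $\sigma^{-1}(i) \subset [0,1]$ is cut out by finitely many equalities and inequalities involving elementary (log-exp) functions in the single variable $t$. Such a set is definable in the o-minimal log-exp structure (Wilkie's theorem), so by the monotonicity/finiteness theorem it is a finite union of open intervals and isolated points. Collecting endpoints across all $i$, I obtain a finite partition $0 = t_0 < t_1 < \dots < t_N = 1$ of $[0,1]$ such that $\sigma$ is constant, equal to some $i_k$, on each open subinterval $(t_{k-1}, t_k)$.

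On $(t_{k-1}, t_k)$, $g(t) = f_{i_k}(\gamma(t))$ is $C^\infty$ with $g'(t) = \langle y - x, \nabla f_{i_k}(\gamma(t))\rangle = \langle y - x, \selgrad^s f(\gamma(t))\rangle$. Moreover $g$ extends continuously to the closed interval $[t_{k-1}, t_k]$ by continuity of $f$, and $g'$ is integrable on the open interval (a consequence of definability in the log-exp o-minimal structure: a definable continuous function on a compact interval which is $C^1$ on the interior has an integrable derivative). The fundamental theorem of calculus then gives
\begin{equation*}
g(t_k) - g(t_{k-1}) = \int_{t_{k-1}}^{t_k} \langle y - x, \selgrad^s f(\gamma(t))\rangle\,dt,
\end{equation*}
and summing over $k$ telescopes to the desired identity $f(y) - f(x) = \int_0^1 \langle y - x, \selgrad^s f(\gamma(t))\rangle\,dt$.

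Local Lipschitzness is obtained in the same spirit: on any compact $K \subset \RR^p$, an o-minimal cell decomposition shows that $\|\selgrad^s f\|$ is bounded by some constant $L_K < \infty$ on $K$, and applying the integral formula to any segment contained in $K$ yields $|f(y) - f(x)| \le L_K \|y - x\|$. The main obstacle is invoking the rigidity of the log-exp o-minimal structure twice: once to get finiteness of the partition (ruling out infinite oscillations of $s$ along the segment), and once to guarantee integrability of $g'$ on each piece despite $f_{i_k}$ possibly having singularities at the endpoints. For arbitrary piecewise-smooth selections neither would hold in general, and it is precisely this point that motivates restricting to the elementary log-exp class in Definition~\ref{def:elemFunction}.
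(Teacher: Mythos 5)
Your proof is correct and follows essentially the same route as the paper's: both reduce to the segment, invoke definability in the log-exp o-minimal structure to obtain a finite partition of $[0,1]$ on which the index $s\circ\gamma$ is constant (this is exactly Claim~\ref{cl:finitePartitionSegment}), apply the fundamental theorem of calculus to the active $C^1$ brick on each piece, and telescope. Your additional care about endpoint integrability of $g'$ and the explicit derivation of local Lipschitzness from boundedness of $\|\selgrad^s f\|$ on compacts are fine refinements of the same argument.
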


\begin{proposition}[Gradient almost everywhere]\label{gevery}
				Let $f \colon  \RR^p \mapsto \RR$ be in $\SSS$, with elementary selection $(s, f_1, \ldots, f_m)$. There exists sets $U_1,\ldots,U_N$ with nonempty interior such that $\bigcup_{i=1}^N \mathrm{cl}(U_i) = \RR^p$ and for each $i=1$, and for all $x$ in the interior of $U_i$, $\selgrad^s f(x) = \nabla f(x)$.
				Furthermore, the $U_i$ are solution sets of equations and inequalities involving functions in $\mathcal{E}$.
				\label{prop:SGradientAE}
\end{proposition}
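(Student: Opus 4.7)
The plan is to partition $\RR^p$ using the sign patterns of the elementary functions entering the definition of $s$, and to check that the full-dimensional cells of this partition are the desired $U_i$.

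By Definition \ref{def:elemIndex}, there exists a finite family $g_1, \ldots, g_L \in \mathcal{E}$ such that each level set $\{x : s(x) = i\}$ is a Boolean combination of the sets $\{g_\ell > 0\}$, $\{g_\ell = 0\}$, $\{g_\ell < 0\}$. For each signature $\sigma \in \{-,0,+\}^L$, define the cell
\[
V_\sigma \;=\; \bigcap_{\ell=1}^L \{x \in \RR^p : \mathrm{sign}(g_\ell(x)) = \sigma_\ell\}.
\]
The family $\{V_\sigma\}_\sigma$ is a finite partition of $\RR^p$ on each piece of which every $g_\ell$ has constant sign, so $s$ is constant on $V_\sigma$. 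Consequently $f \equiv f_{i(\sigma)}$ on $V_\sigma$ for some index $i(\sigma)$.

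Next I restrict to the strict-sign signatures $\sigma \in \{-,+\}^L$ and call the corresponding nonempty cells $U_1, \ldots, U_N$. Each $U_j$ is defined by strict inequalities on continuous functions, hence is open, and by construction $f \equiv f_{i(j)}$ on $U_j$. Therefore, for every $x$ in the (nonempty) interior $U_j = \mathrm{int}(U_j)$, one has $\nabla f(x) = \nabla f_{i(j)}(x) = \selgrad^s f(x)$, which gives the desired pointwise identity. Each $U_j$ is manifestly the solution set of a finite system of (strict) inequalities involving elementary functions, which is the final requirement of the statement.

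The last step, and the one requiring the most care, is showing $\bigcup_{j=1}^N \mathrm{cl}(U_j) = \RR^p$. The complement of $\bigcup_j U_j$ equals $Z := \bigcup_{\ell=1}^L \{g_\ell = 0\}$. Each elementary $g_\ell$ is real analytic on its open domain, built from the analytic operations $+,-,\times,/,\exp,\log$. By the identity principle, either $g_\ell$ vanishes on a whole connected component (in which case that component can be excised from the family and absorbed into the Boolean description of $s$, since the corresponding sign condition is trivial there), or $\{g_\ell = 0\}$ is closed with empty interior. After this reduction $Z$ is a finite union of nowhere dense closed sets, hence nowhere dense, so $\bigcup_j U_j$ is open and dense in $\RR^p$, whence $\bigcup_j \mathrm{cl}(U_j) = \RR^p$. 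The main obstacle is precisely this real-analytic-zero-set argument and the associated domain bookkeeping for $\log$ and $1/\cdot$: one has to enlarge the family $\{g_\ell\}$ to include, as elementary functions, the arguments of each $\log$ and each denominator so that domain restrictions are also encoded as sign conditions, before the identity principle can be invoked cleanly.
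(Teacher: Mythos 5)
Your route is genuinely different from the paper's. The paper works directly with the level sets $V_j=\{x: s(x)=j\}$, observes that they form a definable partition in the o-minimal structure generated by the exponential (Wilkie's theorem), and takes a Whitney stratification compatible with them, so that boundaries are lower-dimensional strata and the top-dimensional strata are dense. You instead refine to sign cells of the functions $g_\ell$ defining $s$ and replace the stratification argument by the identity principle for real-analytic functions. The first part of your argument is sound: $s$ is constant on each sign cell, the strict-sign cells are open, the gradient identity holds on them, and they are solution sets of elementary inequalities.

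The density step, however, has a genuine gap, and it is not mere bookkeeping. The identity $\RR^p\setminus\bigcup_j U_j=\bigcup_\ell\{g_\ell=0\}$ fails when some $g_\ell$ is only partially defined: a point outside the domain of $g_\ell$ lies in no strict-sign cell, and the set of such points can have nonempty interior --- for $g(x)=\log(x_1)$ it is the closed half-space $\{x_1\le 0\}$, which no $U_j$ can accumulate on, so $\bigcup_j \mathrm{cl}(U_j)\neq\RR^p$. Your proposed repair (adjoin the arguments of each $\log$ and each denominator as guard functions) is the right idea, but it destroys the product structure of your cells: on $\{x_1<0\}$ there is no sign to assign to $\log(x_1)$, so the decomposition must become a nested one (signs of guards first, then signs of each $g_\ell$ only on the cells where it is defined, recursively through the compositional depth), and the density argument must be rerun at every level; none of this is carried out. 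A second, smaller issue: ``excising'' a connected component on which $g_\ell\equiv 0$ threatens the ``Furthermore'' clause, since connected components of domains of elementary functions are not obviously solution sets of elementary equations and inequalities; and when $g_\ell$ vanishes identically on some components of its domain but not others, the equality cell $\{g_\ell=0\}$ mixes an open part with a nowhere dense part. Both points are repairable (keep equality cells with nonempty interior among the $U_i$, and use the identity principle to show that any zero of a non-identically-vanishing analytic function is a limit of points where it is nonzero), but as written the proof is incomplete, whereas the paper's o-minimal argument absorbs all of these domain and component issues at once.
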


\textbf{Remark:} Although less transparent, Proposition \ref{prop:SCintegration} is not a consequence of Proposition \ref{prop:SGradientAE}.
Both results crucially rely on the rigidity of elementary functions in $\mathcal{E}$ (Definition~\ref{def:C1selection}), not only on their piecewise smoothness. This a central novelty of our approach.

\subsection{A calculus for selection derivatives}

One has an unusual differential calculus: although it does not involve the linearity of some (sub)differential operator, the selection derivative of a sum gives a sum of selection derivatives provided that the selection is refined. 

\begin{proposition}[Chain rule]
				Let $F \colon \RR^{p_1} \mapsto \RR^{p_2}$ such that each of its coordinate $f_i$, $i = 1 \ldots p_2$, is in $\mathcal{S}$ and $g \colon \RR^{p_2} \mapsto \RR$, $g \in \mathcal{S}$. Consider a  selection Jacobian for $F$,   $\seljac_F \colon \RR^{p_1} \mapsto \RR^{p_2 \times p_1}$ 
				\begin{align}
								x &\mapsto \begin{pmatrix}\selgrad {f_1}(x)^T \\
										\vdots\\
										\selgrad {f_q}(x)^T
								\end{pmatrix}
							\label{eq:chainRule}\end{align}
				Then $g \circ F \in \mathcal{S}$ and the function $x \mapsto \seljac_F(x)^T \selgrad g(F(x))$ is a selection derivative for $g \circ F$.
				\label{prop:compositionGeneralizedDerivatives}
\end{proposition}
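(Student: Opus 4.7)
The plan is to prove the chain rule by explicitly constructing a representation of $g \circ F$ whose selection derivative, via formula \eqref{e:deriv}, coincides with $x \mapsto \seljac_F(x)^T \selgrad g(F(x))$. That $g \circ F$ belongs to $\SSS$ is already granted by Proposition \ref{p:stability}, so the content is really about producing compatible indices and pieces.

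First I would fix the representation $(s, F_1, \ldots, F_M)$ of $F$ that underlies the given selection Jacobian (so that $\seljac_F(x) = JF_{s(x)}(x)$), and a representation $(t, g_1, \ldots, g_N)$ of $g$ underlying $\selgrad g(y) = \nabla g_{t(y)}(y)$. Then I would set
$$\sigma(x) = (s(x), t(F(x))) \in \{1,\ldots,M\} \times \{1,\ldots,N\}, \qquad h_{j,k}(x) = g_k(F_j(x)),$$
and check the compatibility $g(F(x)) = h_{\sigma(x)}(x)$: on $\{s = j\}$, $F$ coincides with $F_j$, hence whenever $\sigma(x) = (j,k)$ one gets $g(F(x)) = g_{t(F(x))}(F(x)) = g_k(F_j(x)) = h_{j,k}(x)$. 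Each $h_{j,k}$ lies in $\EEE$ because $\EEE$ is closed under composition (Remark~\ref{r:elementary}), since substituting elementary expressions into an elementary expression yields a finite compositional expression of the allowed type.

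The main verification, and the place where the log-exp rigidity matters, is showing that $\sigma$ is an elementary index, i.e.\ lies in $\mathcal{I}$. I would write
$$\sigma^{-1}(j,k) = \{x : s(x) = j\} \cap \{x : t(F_j(x)) = k\},$$
using again $F = F_j$ on the first factor. The first factor is cut out by equalities and inequalities of functions in $\EEE$ by the hypothesis $s \in \mathcal{I}$; the second factor is cut out by expressions of the form $h_\ell(F_j(x))$ where the $h_\ell \in \EEE$ describe $\{t = k\}$, and these stay in $\EEE$ by compositional closure. Hence each preimage is a solution set of finitely many elementary (in)equalities and $\sigma \in \mathcal{I}$.

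Finally, applying \eqref{e:deriv} to the representation $(\sigma, (h_{j,k}))$ and the classical smooth chain rule to $h_{j,k} = g_k \circ F_j$ (which is $C^\infty$ on its open domain) yields
$$\selgrad^\sigma (g \circ F)(x) = \nabla h_{\sigma(x)}(x) = JF_{s(x)}(x)^T \nabla g_{t(F(x))}(F_{s(x)}(x)) = \seljac_F(x)^T \selgrad g(F(x)),$$
which is the claimed formula. The only delicate step is confirming that the defining conditions of $\sigma$ remain elementary after being pulled back through the pieces $F_j$; everything else is bookkeeping on top of Proposition \ref{p:stability} and the smooth chain rule.
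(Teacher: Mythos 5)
Your proposal is correct and follows essentially the same route as the paper: the product index $\sigma(x)=(s(x),t(F(x)))$ with pieces $g_k\circ F_j$ is exactly the construction in the proof of Proposition~\ref{p:stability} (the sets $U_{ij}$), your pull-back argument for elementarity of $\sigma$ is the paper's Lemma~\ref{lem:elementaryIndex}, and the final smooth chain rule on each piece matches the paper's proof of Proposition~\ref{prop:compositionGeneralizedDerivatives}. No gaps.
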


Proposition \ref{prop:compositionGeneralizedDerivatives} extends readily to the case when the outer function $g$ is multivariate. For example, we  have a sum rule $\selgrad (f+g)=\selgrad f+\selgrad g$ for full-domain functions $f,g$ in~$\SSS$. Indeed, if $F_1$ and $F_2$ are elementary selections then $F_1 \circ F_2\in \SSS$  and 
\begin{align}
    \seljac_{F_1 \circ F_2} = (\seljac_{F_1} \circ F_2) \times \seljac_{F_2}.
    \label{eq:chainRule}
\end{align}

\section{Programs and elementary selections}
\label{sec:programs}
Numerical programs encode numerical functions by combining elementary functions using a predecessor relation which models program  execution. 
In what follows, $m$ can be seen as an estimate of the memory footprint of a program\footnote{We consider programs which do not overwrite values in memory}, while $p$ and $q$ the number of inputs and outputs respectively. 

Given positive integers $m \geq p+q$, a {\em predecessor relation} is a set valued map $\pr \colon \left\{ 1,\ldots,m \right\} \rightrightarrows \left\{ 1, \ldots, m \right\}$ such that 
\begin{itemize}
				\item For $i \in \left\{ 1,\ldots,m \right\}$ and $j \in \pr(i)$, $j < i$. \qquad $\bullet$ For $i \in \left\{ p+1,\ldots,m \right\}$, $\pr(i)$ is nonempty.
\end{itemize}

\begin{minipage}{.55\textwidth}
A predecessor relation induces a partial order on the set of integers from $1$ to $m$ and hence can be represented by a directed acyclic graph \cite[Theorem 9.4.9]{lehman2010mathematics}.
Given $(p,q,m)$ and a predecessor relation $\pr$, a elementary function sequence $\mathcal{G} = (g_i)_{i=p+1}^m$ is a set of functions such that $g_i \colon \RR^{|\pr(i)|} \mapsto \RR$, and $g_i \in \mathcal{S}$, for all $i = p+1, \ldots, m$.  A program $P$ is then given by the data
$		P = \left(p,q,m, \pr, \mathcal{G} \right),	\label{eq:program}
$ while its {\em evaluation} is described in Algorithm \ref{alg:algof}. We denote by  
$\mathcal{P}$ the set of programs, and $\mathcal{P}_{p,q}$ when input-output dimensions have to be made  explicit. 

By definition a program encodes a function, but the representation is not unique. We express this fact below through the canonical surjection $\mathcal F$ of the introduction. 
\end{minipage}\qquad
\begin{minipage}{.4\textwidth}
\begin{algorithm}[H]
  \caption{Program evaluation}
	\label{alg:algof}
	\textbf{Program data:} $p,q \geq 1$, $m \geq p+q$, $\pr$ a predecessor relation, $\mathcal{G} = (g_i)_{i=p+1}^m$ an adapted function sequence.
  \begin{algorithmic}[1]
  \item[\textbf{Input:}] $x=(x_1, \ldots x_p)$
    \FOR{$k=p+1,p+2,\ldots m$}
    \STATE $x_k = g_{k} (x_{\pr (k)})$ where $x_{\pr (k)} = \left( x_i \right)_{i \in \pr (k)}$.
    \ENDFOR
		\item[\textbf{Return:}] $y := (x_j)_{j = m-q + 1}^m$.
\end{algorithmic}
\end{algorithm}
\end{minipage}
The following proposition illustrates the fact that practitioners {\em implicitly} implement selection functions when writing programs.
\begin{proposition}[Programs represents elementary selections]
   Through its input-output correspondence each program $P$ of the form \eqref{eq:program} induces  a function which is an elementary selection.     
   In other words $\mathcal{F}(P) \in \mathcal{S}.$
\end{proposition}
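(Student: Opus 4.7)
The plan is to induct on the index $k \in \{1,\ldots,m\}$ and show that the map $\Phi_k \colon \mathbb{R}^p \to \mathbb{R}$, $x \mapsto x_k$, obtained from Algorithm \ref{alg:algof} is in $\mathcal{S}$. Once this is established, the program output $\mathcal{F}(P) = (\Phi_{m-q+1},\ldots,\Phi_m)$ is a vector whose components lie in $\mathcal{S}$, and a single common elementary index is produced by taking the pointwise product (Cartesian tuple) of the component indices; since each factor index has level sets cut out by equalities/inequalities of elementary functions in $\mathcal{E}$, so does their intersection, and one relabels the finite range as $\{1,\ldots,M\}$. This yields $\mathcal{F}(P) \in \mathcal{S}$ in the multivariate sense of Definition \ref{def:C1selection}.

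For the induction, the base case $k \le p$ is immediate: $\Phi_k$ is the $k$-th coordinate projection, an elementary (in fact affine) function, hence in $\mathcal{E} \subset \mathcal{S}$ with the trivial one-branch selection. For the inductive step at $k \ge p+1$, assume $\Phi_j \in \mathcal{S}$ for every $j < k$, in particular for every $j \in \pr(k)$. Using the multivariate extension of $\mathcal{S}$ (via a product index as above), the map $F_k \colon x \mapsto (\Phi_j(x))_{j \in \pr(k)} \in \mathbb{R}^{|\pr(k)|}$ is an elementary selection. By construction $\Phi_k = g_k \circ F_k$ and $g_k \in \mathcal{S}$ by assumption on the program. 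Applying stability of $\mathcal{S}$ under composition (Proposition \ref{p:stability}) gives $\Phi_k \in \mathcal{S}$, completing the induction.

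The one point requiring care — and what I expect to be the main obstacle — is the bookkeeping of the common elementary index when aggregating finitely many scalar selections into a vector-valued one, both inside $F_k$ at each step and in the final output. One has to verify that the product index is still an elementary index in the sense of Definition \ref{def:elemIndex}, i.e., that its level sets $\{s=i\}$ are solution sets of finitely many equalities/inequalities in $\mathcal{E}$; this follows because such level sets are finite intersections of level sets of the factor indices, and $\mathcal{E}$ is stable under the operations defining these constraints. Everything else is a direct appeal to Proposition \ref{p:stability}, so no new analytic content is needed beyond the composition closure already proved.
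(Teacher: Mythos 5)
Your proof is correct and is essentially the argument the paper intends: an induction along the program's predecessor relation, with each step $x_k=g_k(x_{\pr(k)})$ handled by the composition stability of Proposition~\ref{p:stability} (whose own proof already performs the intersection-of-index-domains bookkeeping you flag as the delicate point). The paper leaves this proposition as an immediate consequence of that stability result, so your write-up simply makes the implicit induction explicit.
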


\section{Algorithmic differentiation and a variational model}
\label{sec:autodiff}

Algorithmic differentiation is based on the idea of propagating infinitesimal variations in a program $P$ through the chain rule,  either forward or backward. 

\begin{algorithm}[H]
  \caption{Algorithmic differentiation computes selection gradients}
	\label{alg:autodiff0}
	\textbf{Program data:} $p \geq 1$, $m \geq p+1$, $\pr$ a predecessor relation, $\mathcal{G} = (g_i)_{i=p+1}^m$ an adapted function sequence. \\
	\textbf{Input:} variables $(x_1, \ldots x_m)$ computed by Algorithm \ref{alg:algof}, $d_i = (d_{i}[j])_{j=1}^{|\pr (i)|} =  \selgrad {g_i}(x_{\pr (i)})$, $i = p+1 \ldots m$
	\vspace{.1in}
   
   \begin{minipage}[h]{0.45\linewidth}
   \begin{algorithmic}[1]
    \STATE \textbf{Forward mode:}
    \STATE Initialize: $
    \frac{\partial x_k}{\partial x} = e_k $,\\ $k = 1,\ldots, p$.
    \FOR{$k= p+1, \ldots m$}
    \STATE 
    \[
    \frac{\partial x_k}{\partial x} = \sum_{j \in \pr (k)} \frac{\partial x_j }{\partial x} d_{k}[j]
    \]
    \mbox{where $x=(x_1,\ldots,x_p).$}
    \ENDFOR
    \item[\textbf{Return:}]
$\frac{\partial x_m}{\partial x}$.
  \end{algorithmic}
  \end{minipage}
  \begin{minipage}[h]{0.45\linewidth}
     \begin{algorithmic}[1]
     \STATE \textbf{Backward mode:}
    \STATE Initialize: $ v = e_m$
    \FOR{$t= m, \ldots p+1$} 
    \FOR{$j \in \pr (t)$}
        \STATE Update coordinate $j$ of $v$:
    \[
    v[j] \mathrel{:=}  v[j] + v[t] d_{t}[j]
    \]
    \ENDFOR
    \ENDFOR
    \item[\textbf{Return:}]
$\left(v[1], v[2], \ldots, v[p]\right) $.
  \end{algorithmic}
  \end{minipage}
\end{algorithm}

Consider Algorithm \ref{alg:algof}, and assume for simplicity that $q=1$. The program can be seen as the implementation of $m-p$ successive transformations on $\RR^m$, of the form
\begin{align*}
        G_k \colon \RR^m &\mapsto\RR^m \\
        x &\mapsto x + e_k(g_k(x_{\pr (k)}) - x_k),
\end{align*}
for $k = p+1, \ldots, m$ which belong to $\mathcal{S}$. Algorithm \ref{alg:autodiff0} combines gradients dynamically along two modes: forward or backward. 
Let us describes these two forms.

Fix $x \in \RR^m$. After applying Algorithm \ref{alg:algof}, for each $k$, let $d_k \in \RR^m$ be the selection gradient $\selgrad g_k(x_{\pr (k)})$, appending $0$ to non dependant coordinates. A selection Jacobian of $G_k$ (at $x$) is given by
\begin{align*}
    \seljac_{G_k}  = I - e_k e_k^T + e_k d_k^T
\end{align*}
Denote by $J_p \in \RR^{m \times p}$, the matrix whose entries are $0$, except for diagonal entries which are $1$. In Algorithm \ref{alg:autodiff0}, the forward mode computes 
\begin{align*}
    e_m^T  \seljac_{G_m}  \ldots  \seljac_{G_{p+1}} J_p = e_m^T  \left(I - e_m e_m^T + e_m d_m^T \right)  \ldots   \left(I - e_{p+1} e_{p+1}^T + e_{p+1} d_{p+1}^T \right) J_p
\end{align*}
which is a selection Jacobian thanks to the chain rule in \eqref{eq:chainRule}.
On the other hand the backward mode computes
\begin{align*}
J_p^T  \left(I + d_{p+1}e_{p+1}^T \right) \ldots \left(I + d_{m}e_{m}^T \right) e_m.
\end{align*}
This quantity turns out to be the same as the one computed by the forward mode thanks to:
\begin{lemma}
		Let $p,m \in \NN$, $0<p<m$. Assume that for $i = p+1,\ldots, m$ we have $d_i \in \RR^m$. Then we have 
		\begin{align}
					P_p \left(I - e_{p+1} e_{p+1}^T + d_{p+1}e_{p+1}^T \right) \ldots \left(I - e_{m} e_{m}^T + d_{m}e_{m}^T \right) = P_p \left(I + d_{p+1}e_{p+1}^T \right) \ldots \left(I + d_{m}e_{m}^T \right) 
					\label{eq:backpropAlgebra}
		\end{align}
		where $I \in \RR^{m \times m}$ is the identity matrix and $P_p \in \RR^{m \times m}$ denotes the projection on the first $p$ coordinates.
		\label{lem:algebraBackprop}
\end{lemma}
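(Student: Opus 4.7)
The plan is to exploit the fact that each elementary factor $A_i := I - e_i e_i^T + d_i e_i^T$ and $B_i := I + d_i e_i^T$ acts on the right of an $m\times m$ matrix $M$ by modifying only the $i$-th column. A direct computation gives
\[
M A_i = M - M e_i e_i^T + M d_i e_i^T, \qquad M B_i = M + M d_i e_i^T,
\]
so the $i$-th column of $M A_i$ is $M d_i$, the $i$-th column of $M B_i$ is (column $i$ of $M$) + $M d_i$, and in both products all other columns coincide with those of $M$. Consequently $M A_i = M B_i$ as soon as the $i$-th column of $M$ is zero.

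From here I would prove the equality by a short induction on $k\in\{p,p+1,\dots,m\}$, showing
\[
P_p A_{p+1} \cdots A_k \;=\; P_p B_{p+1} \cdots B_k.
\]
The base case $k=p$ is the empty product, i.e.\ both sides equal $P_p$. For the induction step, let $M$ denote the common value at step $k$. It suffices to check that column $k+1$ of $M$ is zero: indeed, column $k+1$ of $P_p$ vanishes because $k+1>p$, and each factor $A_j$ (resp.\ $B_j$) with $j\le k$ touches only column $j\ne k+1$ of the matrix it multiplies. Thus column $k+1$ of the partial product is still the column $k+1$ of $P_p$, namely $0$, and the observation in the first paragraph gives $M A_{k+1} = M B_{k+1}$, completing the induction.

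The main obstacle is really just bookkeeping: one has to make sure that the ``only column $i$ is affected'' statement is handled cleanly and that the column-zero invariant is maintained throughout. There is no deep difficulty; everything reduces to the rank-one structure of $A_i - B_i = -e_i e_i^T$ and the fact that $P_p$ already annihilates coordinates $p+1,\dots,m$, which are exactly the indices touched by the successive factors.
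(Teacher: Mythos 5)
Your proof is correct. It rests on the same elementary observation as the paper's --- the factor $-e_ie_i^T$ is killed whenever the matrix multiplying it from the left has a vanishing $i$-th column --- but the induction is organized differently. The paper proves the stronger family of identities $P_i A_{i+1}\cdots A_m = P_i B_{i+1}\cdots B_m$ for every $i\in\{p,\ldots,m-1\}$ and inducts \emph{downward} on $i$, prepending one factor at a time; the induction step requires inserting the intermediate projection $P_i$ into the product (via $P_{i-1}P_i=P_{i-1}$ and $e_i^TP_i=e_i^T$) before using $P_{i-1}e_i=0$. You instead keep $P_p$ fixed and induct \emph{upward} on the length of the prefix $P_pA_{p+1}\cdots A_k$, maintaining the invariant that its $(k+1)$-th column is zero; this follows cleanly from your observation that right-multiplication by $A_j$ or $B_j$ alters only column $j$, so the columns of index greater than $k$ are inherited from $P_p$ and vanish. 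Your route avoids strengthening the statement and dispenses with the projection-insertion bookkeeping, at the cost of having to state and track the column invariant explicitly; the paper's route yields the whole family of identities with intermediate projections, which is not needed for the application. Both arguments are complete and equally elementary.
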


Denote by  $\mathcal{A}:\PP_{p,1}\to \PP_{p,p}$ the algorithmic-differentiation operator. This establishes the following fundamental fact which is at the root of this work. This result asserts that practitioners {\em implicitly} implement selection derivatives when writing numerical programs and calling forward or backward AD on these programs.
\begin{theorem}[Algorithmic differentiation outputs a selection gradient]\label{t:autodiffgrad}
Algorithmic differentiation of a given program, i.e.,  $\mathcal{A}(P)$, outputs a selection derivative of the underlying numerical function. In other words there exists a representation of the numerical function $\FF(P)$ with elementary index $s$ such that:
$$\FF(\mathcal{A}(P))=\selgrad^s \FF(P).$$
\end{theorem}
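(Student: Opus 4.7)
The plan is to assemble the pieces already developed in Section \ref{sec:autodiff}: decompose the program as a composition of elementary selection maps, apply the selection chain rule to read off the selection Jacobian as a product, and invoke Lemma \ref{lem:algebraBackprop} to identify both the forward and backward outputs with this product.

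First, I would assume without loss of generality that $q=1$, and formalize the program $P = (p,1,m,\pr,\mathcal{G})$ as the composition map $H \colon \RR^p \to \RR$ defined by $H(x) = e_m^T(G_m \circ G_{m-1} \circ \cdots \circ G_{p+1})(J_p x)$, where $J_p \in \RR^{m \times p}$ embeds the $p$ input coordinates into $\RR^m$ (with zeros on the last $m-p$ slots), and $G_k(x) = x + e_k(g_k(x_{\pr(k)}) - x_k)$. A straightforward unrolling shows that this $H$ coincides with $\FF(P)$. Since each $g_k \in \SSS$ and the map $G_k$ is a smooth combination of $g_k$ with the identity, Proposition \ref{p:stability} gives $G_k \in \SSS$; iterating this stability under composition yields $H \in \SSS$, which is also the content of Proposition 5.

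Second, I would compute the selection Jacobian. Fix $x \in \RR^p$ and run Algorithm \ref{alg:algof} to obtain the intermediate values $x_{p+1},\ldots,x_m$. For each $k$, let $d_k \in \RR^m$ be the selection gradient $\selgrad g_k(x_{\pr(k)})$ padded by zeros; this comes with an elementary index $s_k$ selecting the branch of $g_k$. Then, branch by branch, the selection Jacobian of $G_k$ is exactly $\seljac_{G_k} = I - e_k e_k^T + e_k d_k^T$. Applying the chain rule \eqref{eq:chainRule} repeatedly to $H$ produces the elementary index $s$ obtained by taking the joint refinement of the pulled-back indices $s_k \circ (G_{k-1} \circ \cdots \circ G_{p+1} \circ J_p)$; this is itself an elementary index since the class $\mathcal{I}$ is closed under composition with elementary functions and finite intersections of level sets. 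For this representation,
\begin{equation*}
    \selgrad^s H(x)^T = e_m^T \seljac_{G_m} \seljac_{G_{m-1}} \cdots \seljac_{G_{p+1}} J_p.
\end{equation*}

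Third, I would identify the two AD modes with this expression. The forward pass of Algorithm \ref{alg:autodiff0} is, by inspection, precisely the left-to-right unrolling of the product above, so $\FF(\mathcal{A}_{\text{fwd}}(P))(x) = \selgrad^s H(x)$. For the backward pass, the iteration $v \mathrel{+}= v[t]\, d_t$ over $t = m,\ldots,p+1$ followed by reading $v[1{:}p]$ evaluates $J_p^T (I + d_{p+1} e_{p+1}^T) \cdots (I + d_m e_m^T) e_m$. Lemma \ref{lem:algebraBackprop} applied with $P_p = J_p J_p^T$ shows this equals $J_p^T \seljac_{G_{p+1}}^T \cdots \seljac_{G_m}^T e_m$, which is the transpose of the forward output. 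Hence both modes return the same vector $\selgrad^s \FF(P)(x)$.

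The main subtlety, and the step I would spend the most care on, is the bookkeeping of the representation: one must check that the elementary indices $s_k$ pulled back through the preceding $G_j$'s combine into a single elementary index $s$ on $\RR^p$, so that the output of AD is genuinely a selection derivative in the sense of Definition \ref{def:derivative}, rather than merely coinciding pointwise with some branch gradient. This boils down to verifying that $\mathcal{I}$ is closed under composition with functions in $\SSS$ and under finite Boolean combinations, which follows from Definition \ref{def:elemIndex} and the stability of $\EEE$ and $\SSS$ under the operations used to build the $G_k$. Once this is secured, the equality $\FF(\mathcal{A}(P)) = \selgrad^s \FF(P)$ drops out of the preceding computation.
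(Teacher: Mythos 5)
Your proposal is correct and follows essentially the same route as the paper: decompose the program into the elementary selection maps $G_k$, apply the chain rule of Proposition \ref{prop:compositionGeneralizedDerivatives} to identify the forward product $e_m^T \seljac_{G_m}\cdots\seljac_{G_{p+1}}J_p$ as a selection Jacobian, and invoke Lemma \ref{lem:algebraBackprop} to show the backward mode yields the same vector. Your extra care about pulling back the branch indices through the preceding $G_j$'s is exactly what the paper delegates to the proof of Proposition \ref{p:stability} and Lemma \ref{lem:elementaryIndex}, so nothing is missing.
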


\section{Algorithmic differentiation at work}

\subsection{Selection derivatives, conservative fields and Clarke subgradient}

The asymptotic study of first-order optimization methods implies limiting processes and necessitates thus the introduction of graph closed operators. Given a representation for $f$, we may construct such a convex-valued mapping pointwise as follows\footnote{Minimality relates to the {\em representation of the function}, not the function itself. This is the minimal convex-valued operator, constructed pointwise and guaranteed to be graph-closed.}. 
\begin{definition}[Representation minimal operator]\label{def:tensorflowConservativeField}
			{\em	Let $f \in \mathcal{S}$ with elementary selection $(s, f_1, \ldots, f_m)$. For any $x \in \RR^p$, set $I(x) = \left\{ i\in\left\{ 1,\ldots,m \right\}, \, f(x) = f_i(x) \right\}$. The index closure of $\selgrad^s f$ is given by the set valued map
				\begin{align*}
				D^s_f \colon \RR^p &\rightrightarrows \RR^p \\
								x &\rightrightarrows \mathrm{conv}\left( \left\{ \nabla f_i(x),\, i \in I(x) \right\} \right).
				\end{align*}
				where the double arrows express that the map has values in subsets of $\RR^p$, much like subgradients, and $\conv$ denotes the convex hull.
}
\end{definition}

The role of $D^s_f$ is to capture all possible outputs of AD including all possible program branches. Of course, due to combinatorial explosion, this quantity is intractable in practice. Its introduction here is only instrumental, we do not use it in algorithms, we just need to access one of its element, for example using a selection derivatives, obtained from AD.
A point $x$ satisfying $0 \in D^s_f(x)$ is called a {\em selection critical point}. We will often drop the index $s$ and write $D_f=D^s_f$. 

The two following results highlight crucial properties of $D_f$ in terms of optimization, they again rely on the rigidity constraint of elementary functions.

\begin{theorem}\label{th:tensorflowConservativeField}
				Let $f \in \mathcal{S}$ with  elementary selection  $(s, f_1, \ldots, f_m)$ and $D_f$ be as in Definition \ref{def:tensorflowConservativeField}. Then $D_f$ is conservative for $f$, that is for all absolutely continuous curves $\gamma \colon [0,1] \mapsto \RR^p$, for almost all $t \in [0,1]$, $f\circ \gamma$ is differentiable and
				\begin{align*}
								\frac{d}{dt} f(\gamma(t)) = \left\langle v, \dot{\gamma}(t) \right\rangle,\qquad \forall v \in D_f(\gamma(t)).
				\end{align*}
\end{theorem}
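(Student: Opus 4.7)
The plan is to establish the chain rule identity via a Lebesgue density point argument applied to the level sets of the selection index. Proposition \ref{prop:SCintegration} gives that $f$ is locally Lipschitz, so $h(t) := f(\gamma(t))$ is absolutely continuous; in particular both $h'(t)$ and $\dot\gamma(t)$ exist for almost every $t$. For each $i\in\{1,\ldots,m\}$, consider the closed set
\begin{align*}
A_i := \{t \in [0,1] : f(\gamma(t)) = f_i(\gamma(t))\},
\end{align*}
so that $\bigcup_{i=1}^m A_i = [0,1]$ and $i \in I(\gamma(t)) \iff t \in A_i$. For any $t \in A_i$, continuity of $\gamma$ places $\gamma$ in the open domain of $f_i$ on a neighborhood of $t$, so $h_i(t) := f_i(\gamma(t))$ is well-defined locally and the standard chain rule for a $C^1$ function composed with an absolutely continuous curve gives $h_i'(t) = \langle \nabla f_i(\gamma(t)), \dot\gamma(t)\rangle$ wherever $\dot\gamma(t)$ exists.

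The crux of the argument: fix $i$ and let $t^* \in A_i$ be a Lebesgue density point of $A_i$ at which $h$, $h_i$, and $\gamma$ are all differentiable. Pick $t_n \in A_i$ with $t_n \to t^*$ and $t_n \neq t^*$, which is possible by density. Since $h \equiv h_i$ on $A_i$,
\begin{align*}
h'(t^*) \;=\; \lim_{n\to\infty}\frac{h(t_n)-h(t^*)}{t_n-t^*} \;=\; \lim_{n\to\infty}\frac{h_i(t_n)-h_i(t^*)}{t_n-t^*} \;=\; h_i'(t^*) \;=\; \langle \nabla f_i(\gamma(t^*)), \dot\gamma(t^*)\rangle.
\end{align*}
The Lebesgue density theorem ensures that almost every $t \in A_i$ is a density point of $A_i$. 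Intersecting with the full-measure set on which $h$, every $h_i$, and $\gamma$ are differentiable, and taking the finite union over $i$, we get a full-measure subset of $[0,1]$ on which $h'(t) = \langle \nabla f_i(\gamma(t)), \dot\gamma(t)\rangle$ holds for \emph{every} $i \in I(\gamma(t))$. Taking convex combinations then yields $h'(t) = \langle v, \dot\gamma(t)\rangle$ for every $v \in D_f(\gamma(t))$, which is precisely the conservativity condition.

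The step I expect to be the main obstacle is rigorously applying the chain rule identity for $h_i$ given that $f_i$ is only defined on an open subset of $\RR^p$; this is handled by restricting to $A_i$, where $\gamma(t)$ lies in $\mathrm{dom}(f_i)$, and extending $f_i \circ \gamma$ to an open neighborhood of each such $t$ via continuity of $\gamma$. Two other details must be checked carefully: that $A_i$ is measurable (immediate from continuity of $h$ and $h_i$ on the relevant set, giving $A_i$ closed), and that the finiteness of the family $\{A_i\}_{i=1}^m$ keeps the union of exceptional null sets null — both of which are immediate from the definition of an elementary selection.
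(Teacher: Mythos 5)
Your proof is correct, and it takes a genuinely different route from the paper's. The paper establishes conservativity by exhibiting a $C^1$ variational stratification: the sets $\{x : I(x) = I\}$ are definable in the o-minimal structure generated by $(\RR,\exp)$, one takes a Whitney stratification compatible with them, observes that all active $f_i$ coincide on each stratum and therefore share the same Riemannian gradient there, and then invokes the projection-formula characterization of conservative fields (Theorem \ref{th:characterizationConservativity}, imported from \cite{bolte2020conservative}). You replace all of this with the Lebesgue density theorem applied along the curve to the sets $A_i = \{t : f(\gamma(t)) = f_i(\gamma(t))\}$: at a density point of $A_i$ where $h$, $h_i$ and $\gamma$ are all differentiable, the difference quotients of $h$ and $h_i$ agree along a sequence in $A_i\setminus\{t^*\}$, forcing $h'(t^*)=h_i'(t^*)$, and the finite union over $i$ of the exceptional null sets stays null, so the identity holds simultaneously for every $i\in I(\gamma(t))$ and passes to convex combinations. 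Your argument is more elementary and, notably, uses no definability at all beyond the local Lipschitz continuity of $f$ (which you legitimately import from Proposition \ref{prop:SCintegration}); it would apply verbatim to any continuous selection of $C^1$ functions with an arbitrary, merely measurable index. What the paper's heavier route buys is the stratification itself, which is reused downstream: the finiteness of $D_J$-critical values (Proposition \ref{p:art}) underpinning Theorem \ref{th:convergenceGrad} genuinely needs the o-minimal machinery, so that machinery cannot be dispensed with globally — only for this particular statement. One cosmetic point: when $\mathrm{dom}(f_i)\neq\RR^p$ the set $A_i$ is only relatively closed in the open set $\gamma^{-1}(\mathrm{dom}\, f_i)$ rather than closed in $[0,1]$, but it is still Borel, which is all the density theorem requires.
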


The previous result generalizes Proposition \ref{prop:SCintegration} by allowing to integrate arbitrary selections along absolutely continuous curves.
This connects our work to the general setting of \cite{bolte2020conservative}, note that $D_f$ has a closed graph thanks to Proposition \ref{prop:Dcontinuity} in Appendix \ref{sec:auxiliaryResults}.

In \cite{scholtes2012introduction}, the author considers the {\em essential index set}, for each $x \in \RR^p$,
\begin{align*}
    S_E(x) = \left\{ i\in\left\{ 1,\ldots,m \right\}, \, x\in \mathrm{cl}(\mathrm{int}(\{y,\,f(y) = f_i(y)\})) \right\} \subset S(x).
\end{align*}
Considering Definition \ref{def:tensorflowConservativeField} with $S_E(x)$ instead of $I(x)$ leads to the Clarke subgradient, which can also be defined as 
$$\partial^cf(x)=\mathrm{conv}\{d\in \R^p:\exists x_k \in \Delta_f, x_k\to x, \nabla f(x_k)\to d\}$$
where $\Delta_f$ is the dense set of differentiability points of $f$.  While $I(x)$ can be computed pointwise (check finitely many equalities), it might be very hard to check membership in $S_E(x)$ without restrictive qualification conditions on programs \cite{kakade2018provably}.

\paragraph{Illustration with ReLU and sorting:} 
(a) Set for all $x \in \RR$, $f_1(x) = 0$, $f_2(x) = x$, $s(x) = 1$ for $x \leq 0$ and $s(x) = 2$ for $x>0$. This  is $\relu$. In this case $D_f = \partial \relu$, the convex subgradient.\\
(b)  
Let  $F \colon \RR^p \mapsto \RR^p$ to be the sorting function which associates to $x$ a vector $Px$ where $P$ is any permutation such that $Px$ belongs to the set of vectors which values are sorted in descending order coordinatewise. $F$ obviously has an elementary selection  and the construction which we have proposed leads to
\begin{align*}
    D_F \colon x &\mapsto \conv \left\{ P \in \Delta, \quad Px = F(x) \right\},
\end{align*}
where $\Delta$ denotes the set of permutation matrices of size $p \times p$. Then $D$ is a conservative mapping for $F$ and it actually corresponds to the Clarke Jacobian.

\subsection{Convergence of gradient type algorithm and criticality of limit points}

Optimization processes in learning are supposed to provide at least a critical point $x$ of the loss, i.e. a point satisfying   $0 \in \partial^cf(x)$. When using AD  one enlarges the definition of criticality into 
$0 \in D_f(x)$  and {\em artificial critical points} appear, they satisfy $0 \not \in \partial^c f(x)$ and $0 \in D_f(x)$. Artificial critical points could possibly trap the optimization process in strongly non-optimal situations, we thus have to determine if they have an impact on learning phases. 

We consider the problem
\begin{align}
    \min_{x \in \RR^p} J(x) = \frac{1}{n} \sum_{i=1}^n f_i(x)
    \label{eq:mainProblem}
\end{align}
where $f_i \colon \RR^p \mapsto \RR$, $f_i \in \mathcal{S}$, $i=1,\ldots, n$. We consider the following algorithm, given $x_0 \in \RR^p$, a sequence of positive step sizes $(\gamma_k)_{k \in \NN}$ and a sequence of \textit{iid} indices $(I_k)_{k\in \NN}$ taken uniformly in the nonempty subsets of $\left\{0,\ldots, n\right\}$,
\begin{align}
    x_{k+1} = x_k - \gamma_k \selgrad f_{I_k}(x_k) \mbox{ where }f_I=\frac{1}{|I|}\sum_{i\in I}f_i, \:I\subset \{1,\ldots,n\}.
    \label{eq:gradientDescent}
\end{align}
Note that as discussed in Section \ref{sec:autodiff} selection derivatives can be computed by AD if $f_i$ are given by the data of numerical programs as in \eqref{eq:program}, and could be far from usual notions of subgradients. Hence this algorithm models explicitly the training of a nonsmooth deep network using existing backpropagation implementations. Note that $J \in \mathcal{S}$ and that $1/n \sum_{i=1}^n \selgrad f_i$ is a selection gradient for $J$ as stated in Proposition \ref{prop:compositionGeneralizedDerivatives}, denote by $\selgrad J$ this quantity and $D_J$ the corresponding set valued field (Definition \ref{def:tensorflowConservativeField}). The following result illustrates that selection critical points are the only attractors for the recursion and that generically such attractors are actually Clarke critical. The first result stands on the theory developed in \cite{benaim2005stochastic}. The second parallels developments in \cite{bianchi2020convergence} in the context of long run convergence. The spurious behaviour illustrated in Figure \ref{fig:illustrAutodiff} does not affect asymptotics, for typical initialization.
\begin{theorem}[Convergence and insignificance of artefacts]
    \label{th:convergenceGrad}
    Let for all $k$, $\gamma_k = c \alpha_k$ where $c \in (0,1]$ and $\alpha_k=o(1/\log k)$ and $K \subset \RR^p$ be open.
Assume that for all $c \in (0,1]$ and all $x_0 \in K$ the sequence in \eqref{eq:gradientDescent} is bounded almost surely. 
\begin{itemize}
    \item For all $x_0 \in K$, almost surely, $J(x_k)$ converges as $k$ tends to infinity and all accumulation points, $\bar{x}$, of $(x_k)_{k \in \NN}$ are selection critical points: $0 \in D_J(\bar{x})$.
    \item For almost all $c \in (0,1]$, almost all $x_0 \in K$, and almost surely, any accumulation point, $\bar{x}$, of $(x_k)_{k \in \NN}$ is Clarke critical: $0 \in \partial^cJ(\bar{x})$.
\end{itemize}

    \label{th:trapAvoidance}
\end{theorem}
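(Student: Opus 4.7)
The plan is to cast the recursion (\ref{eq:gradientDescent}) as a Robbins--Monro stochastic approximation driven by the set-valued field $-D_J$, and then combine two layers of results: Bena\"im--Hofbauer--Sorin \cite{benaim2005stochastic} for convergence to the selection-critical set in Part 1, and a trap-avoidance argument in the spirit of \cite{bianchi2020convergence} to rule out artefactual critical points in Part 2.

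\textbf{Setup.} Let $\mathcal F_k=\sigma(I_0,\ldots,I_{k-1})$. The chain rule of Proposition \ref{prop:compositionGeneralizedDerivatives} gives $\selgrad f_I=\frac{1}{|I|}\sum_{i\in I}\selgrad f_i$, so a short combinatorial computation (each $j$ lies in $\binom{n-1}{k-1}$ subsets of size $k$ and $\binom{n-1}{k-1}/k=\binom{n}{k}/n$) yields
\begin{align*}
E[\selgrad f_{I_k}(x_k)\mid\mathcal F_k]=\frac{1}{2^n-1}\sum_{\emptyset\neq I\subset\{1,\ldots,n\}}\frac{1}{|I|}\sum_{i\in I}\selgrad f_i(x_k)=\frac{1}{n}\sum_{i=1}^{n}\selgrad f_i(x_k)=\selgrad J(x_k).
\end{align*}
By construction $y_k:=\selgrad J(x_k)\in D_J(x_k)$, so (\ref{eq:gradientDescent}) reads $x_{k+1}=x_k-\gamma_k(y_k+m_k)$ with $(m_k)$ a martingale difference sequence whose increments are uniformly bounded on the a.s.\ bounded trajectory, because each $f_i$ is locally Lipschitz by Proposition \ref{prop:SCintegration}.

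\textbf{Part 1.} Theorem \ref{th:tensorflowConservativeField} says that $D_J$ is conservative for $J$, so along any absolutely continuous solution $\gamma$ of the differential inclusion $\dot x\in -D_J(x)$ one has $\frac{d}{dt}J(\gamma(t))=-\|v\|^2$ for every $v\in D_J(\gamma(t))$; thus $J$ is a strict Lyapunov function whose critical set is precisely $\{0\in D_J\}$. The map $D_J$ has closed graph, convex compact values and is locally bounded; combined with $\gamma_k\to 0$, $\sum\gamma_k=\infty$, and the bounded martingale noise above, this places us in the scope of the main stochastic-approximation theorem of \cite{benaim2005stochastic}: the piecewise-affine interpolation of $(x_k)$ is a.s.\ an asymptotic pseudo-trajectory of $\dot x\in-D_J(x)$ whose limit set is internally chain transitive. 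The strict Lyapunov property then forces this limit set into $\{0\in D_J\}$ and makes $J$ constant on it, proving the first bullet.

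\textbf{Part 2.} Let $\mathrm{Art}=\{0\in D_J\}\setminus\{0\in\partial^c J\}$ be the set of artefactual critical points. Every function in $\mathcal E$ is definable in the log-exp o-minimal structure, and $\mathcal S$ preserves this definability, so the Sard theorem for conservative fields from \cite{bolte2020conservative} yields that $\{J(x):0\in D_J(x)\}$ is \emph{finite}; in particular $V:=J(\mathrm{Art})$ is finite. Part~1 provides an a.s.\ limit value $L(c,x_0,\omega)=\lim_k J(x_k)$, and every accumulation point lies in $\{0\in D_J\}\cap J^{-1}(L)$; thus it suffices to show that
\begin{align*}
\Pr\bigl(L(c,x_0,\cdot)\in V\bigr)=0\qquad\text{for Lebesgue-a.e.\ }(c,x_0)\in(0,1]\times K.
\end{align*}
This is the trap-avoidance argument of \cite{bianchi2020convergence}, adapted to the conservative field $D_J$: rescaling time by $c$ produces a one-parameter family of asymptotic pseudo-trajectories whose limit values depend non-trivially on $c$, and in the slow-rate regime $\alpha_k=o(1/\log k)$ a Bena\"im--Hirsch escape estimate around each artefactual level $\ell\in V$, combined with Fubini in $(c,x_0)$ and Borel--Cantelli, shows that $\{L\in V\}$ defines a null set. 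At any such escape the key mechanism is that a point $\bar x\in\mathrm{Art}$ admits a strict Clarke descent direction unused by $D_J$, which destabilises the spurious equilibrium.

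\textbf{Main obstacle.} Part~1 is almost mechanical once $D_J$ is identified as a conservative Lyapunov field. The delicate point is Part~2: translating the finiteness of $V$ into a.s.\ avoidance requires quantitative escape estimates around artefactual equilibria of the differential inclusion, and these demand precisely the slow step-size regime $\alpha_k=o(1/\log k)$ so that Borel--Cantelli closes after integrating the escape probabilities over the rescaling parameter $c$. The cleanest route is to exploit the definable stratification of $\mathrm{Art}$ into finitely many manifolds on each of which at least one representative of $D_J$ is strictly separated from $\partial^c J$, providing a uniform one-sided descent direction that the noise, scaled by a typical $c$, can align with positively.
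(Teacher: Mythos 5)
Your Part 1 is essentially the paper's argument and is sound: conservativity of $D_J$ (Theorem \ref{th:tensorflowConservativeField}), the closed-graph/convex-valued structure of $D_J$, and the stochastic approximation results of \cite{benaim2005stochastic} as packaged in \cite{bolte2020conservative}. One small omission: to force the internally chain transitive limit set into $\{0\in D_J\}$ and to get convergence of $J(x_k)$ you already need the finiteness (or at least empty interior) of the set of $D_J$-critical values, i.e.\ the definable Sard-type statement of Proposition \ref{p:art}; you only invoke this in Part 2, but it belongs to Part 1 as well.

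Part 2 has two genuine gaps. First, the reduction ``it suffices to show $\Pr(L\in V)=0$ with $V=J(\mathrm{Art})$'' cannot be carried out: a genuine Clarke critical value attained with positive probability (say a strict local minimum) may coincide with the value of an artefactual critical point located elsewhere, in which case $\Pr(L\in V)>0$ even though every accumulation point is Clarke critical. Avoiding artefactual \emph{values} is strictly stronger than avoiding artefactual \emph{points} and is false in general. Second, and more fundamentally, the proposed mechanism --- noise destabilising the spurious equilibrium along a Clarke descent direction ``unused by $D_J$'' --- does not exist: the recursion \eqref{eq:gradientDescent} only ever evaluates selection gradients, and at an artificial critical point the program returns zero (think of $x\mapsto x-\zero(x)$ at $0$), so there is no deterministic or stochastic force pushing the iterates away; such a point is a bona fide fixed point of the dynamics actually implemented, and no Bena\"im--Hirsch escape estimate applies. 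The paper's proof works in the opposite direction (Theorem \ref{t:trap}): it shows that for a.e.\ initialization the trajectory \emph{never reaches}, in any finite time, the definable set of dimension $<p$ where some $\selgrad f_I$ differs from $\nabla f_I$ or some $f_I$ is nondifferentiable. The key tool is Claim \ref{cl:nonflat}: each iteration map $x\mapsto x-\gamma\nabla f_{I,j}(x)$ built from the smooth bricks pulls back definable sets of dimension $<p$ to sets of dimension $<p$, for all $\gamma$ outside a finite exceptional set $F$ obtained by a definable Sard argument on the eigenvalues of $\nabla^2 f_{I,j}$; a countable union over finite horizons then gives the null set $N$ of bad initializations. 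The parameter $c$ serves only to guarantee, by countability, that for a.e.\ $c$ the steps $c\alpha_k$ all avoid $F$ --- not to produce a rescaled family of pseudo-trajectories. Once the trajectory stays where $\selgrad f_I=\nabla f_I$, it is literally a stochastic (sub)gradient sequence and \cite{davis2018stochastic,benaim2005stochastic} conclude. Your sketch is missing this entire finite-horizon preimage/dimension argument, which is the actual content of the second bullet.
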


\section{Conclusion}
The current work departs from existing approaches to  nonsmooth algorithmic differentiation  in a fundamental way. We propose to study the backward mode of AD, as implemented in machine learning, without any modification. Our theoretical results model thus AD  ``as is", and our focus is precisely on its unpredictable behavior  in a nonsmooth context, addressing an issue which is ubiquitous in machine learning. Our main contribution was to prove  that, in a stochastic optimization context, this spurious behavior is essentially harmless from a theoretical point of view, providing justifications for the use of AD outside of its original domain of validity in machine learning. 

We achieve our goal by modeling sharply common machine learning functions and their differentiation using selection derivatives, a known concept, which models the way AD differentiates nonsmooth programs. We restrict it to certain classes of elementary functions, opening the possibility to use powerful geometric techniques.

Further questions include convergence rates and complexity issues, hardly tackled at this day, let us mention the attempt of \cite{zhang2020complexity}. Our theory is limited to continuous functions and an interesting venue is to extend it to discontinuous functions, in view of treating ranking operations \cite{blondel2020fast} ubiquitous in recommendation systems, or more generally differentiating through an argmax \cite{berthet2020learning}.

\section*{Broader impact}
One of the goals of the paper is to raise awareness about an important issue of in the training of ML methods: the spuriousness of AD. To address adequately this issue, we think it is necessary  to include algorithmic differentiation explicitly in the study of optimization algorithms, a point of view which is largely ignored by today's machine learning community. 

\begin{ack}
The authors acknowledge the support of ANR-3IA Artificial and Natural Intelligence Toulouse Institute, Air Force Office of Scientific Research, Air Force Material Command, USAF, under grant numbers FA9550-19-1-7026, FA9550-18-1-0226, and ANR MasDol. J. Bolte acknowledges the support of ANR Chess, grant ANR-17-EURE-0010 and ANR OMS. The authors would like to thank anonymous referees for careful reading of this work and useful suggestions.
The authors would like to thank N. Asher and S. Gerchinovitz for useful discussions. We also warmly thank J. Malick who triggered this research. 
\end{ack}

\newpage
\appendix

This is the appendix for ``A mathematical model for automatic differentiation in machine learning''.

\section{A more comprehensive discussion and auxiliary results}
\label{sec:moreDetails}
\subsection{Related work and contribution}
The use of backward mode of algorithmic differentiation (AD) for neural network training expanded in the 80's, the most cited reference being \cite{rumelhart1986learning}. However the theory applies to much more optimization problems, see for example \cite{griewank2012who}. Indeed, numerical libraries implementing the backward mode of AD were already available in the 90's for \texttt{FORTRAN} code \cite{bischof1992adifor,bischof1996adifor} or \texttt{C/C++} code \cite{griewank1996algorithm}, 30 years before the emergence of \texttt{python} libraries. These early implementation could differentiate virtually any code, but their domain of validity, i.e., the setting for which one could predict what the output would be, was restricted to differentiable functions evaluated on their (open) domain of differentiability. 

This was well known to the AD community, see for example \cite{griewank2008evaluating}, and exploring further the domain of validity of AD, beyond mere differentiability, was already a vivid problem.

Let us mention \cite{griewank2008evaluating} who used notions such as finite selection, ``isolated criticalities", stable domain or regular arcs, and argued that ``functions given by evaluation procedures are almost everywhere real analytic or stably undefined'' where ``undefined'' meant that a nonsmooth elementary function is used in the evaluation process. 
For piecewise smooth functions which nonsmoothness can be described using the absolute value function (abs-normal form), \cite{griewank2013stable} developped a piecewis linearisation formalism and local approximation related to AD, \cite{griewank2016lipschitz} proposed an AD based bundle type method. These developments are based on the notion of piecewise smooth functions \cite{scholtes2012introduction} which we use in this work. More recently, \cite{griewank2019treating} applied these techniques to single layer neural network training and \cite{griewank2020beyond} proposed to avoid the usage of subgradient ``oracles'' in nonsmooth analysis as they are not available in practice.
In a similar vein, let us mention \cite{barton2018computationally} study lexicographic derivatives, a notion of directional derivatives which satisfy a chain rule making them compatible by forward mode AD, and \cite{zhang2020complexity} who use directional derivatives in the context of local sampling stochastic approximation algorithms for machine learning.

Constraint qualification is known in nonsmooth analysis to ensure favorable behavior of chain rules of differential calculus for nonsmooth objects (see \cite{rockafellar1998Variational}). These already appeared in the context of piecewise smooth functions of Scholtes with the notion of ``essential selections''. Such an approach was used in \cite{kakade2018provably} to propose an AD algorithm for subgradient computation under constrant qualification. Similarly \cite{griewank2016first} study first and second order optimality, in relation to AD using constraint qualification.

The current work departs from all these approaches in a fundamental way. We propose to study backward mode of AD, as implemented for nonsmooth functions by standard software (e.g. TensorFlow, PyTorch), without any modification, addition of operations or hypotheses. Our theoretical results model AD as implemented in current machine learning libraries. Contrary to previous works, our focus is precisely on the unpredictable behavior of AD in nonsmooth context. Our main contribution is to show that in a stochastic optimization context, this spurious behavior is essentially harmless from a theoretical point of view, providing justifications for the use of AD outside of its original domain of validity in machine learning. 

At the time this paper was accepted, we learnt about a paper proposing an analysis close to ours \cite{lee2020correctness}. The authors show that AD applied to programs involving piecewise analytic continuous functions, under analytic partitions, compute gradients almost everywhere. This is the counterpart of Proposition~\ref{gevery}, replacing log-exp elementary function in Definitions \ref{def:elemFunction} and \ref{def:elemIndex}, by analytic functions.

\subsection{Implementation of $\relu$}
\label{sec:suppRelu}
The implementation of the $relu$ function used in Figure \ref{fig:illustrAutodiff} is given by the function \texttt{tf.nn.relu} in Tensorflow software library \cite{abadi2016tensorflow}. This implementation corresponds to the selection function described in Section \ref{sec:basicPDfunction} and the same result may be obtained by an explicit implementation of this branching selection as illustrated in the following figure

\begin{center}
				\includegraphics[width=.6\textwidth]{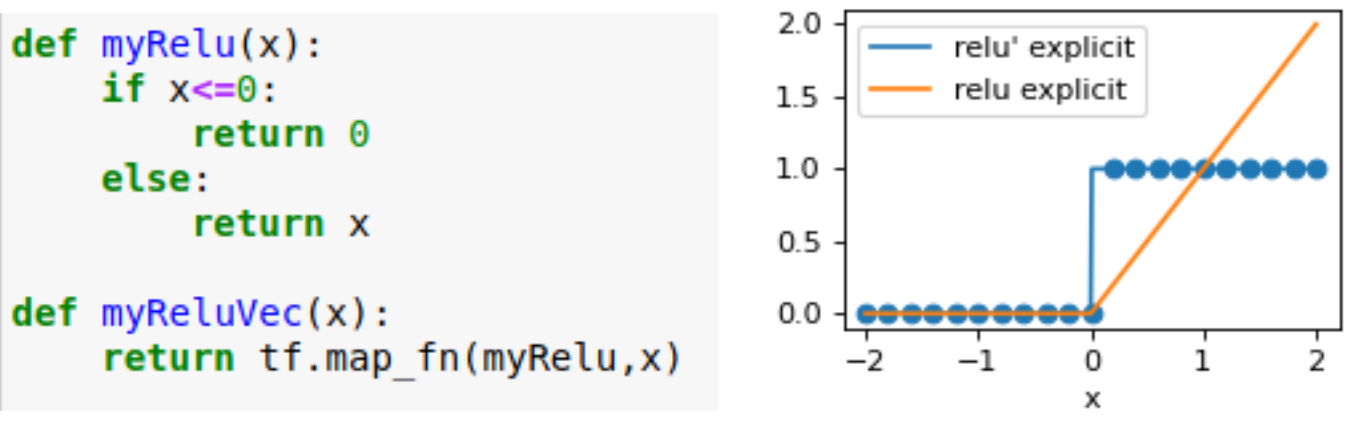}
\end{center}

One can imagine an equivalent implementation of $relu$ with a slightly different branching involving a strict inequality, that would correspond to an equivalent implementation of the same function, but the computed derivative at $0$ is different due to the implementation

\begin{center}
				\includegraphics[width=.6\textwidth]{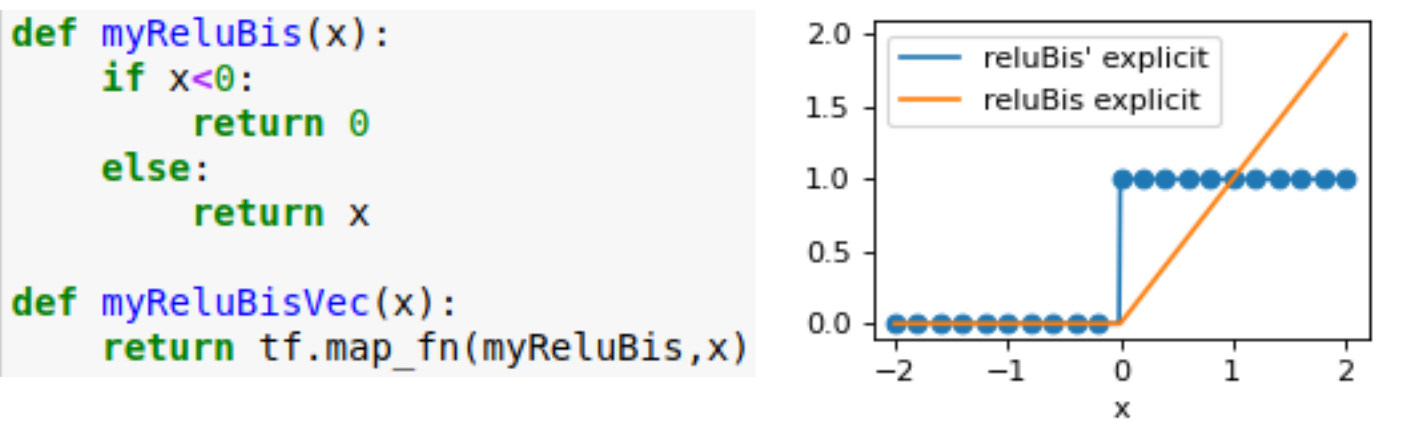}
\end{center}

\subsection{Auxiliary results and remarks}
\label{sec:auxiliaryResults}

\begin{remark}[Elementary piecewise differentiable functions]\label{r:elementary}{\hfill\\\rm (a) The building blocks in the construction of $\SSS$ in Definition \ref{def:C1selection} could be modified and adapted to other needs. Besides, the results we present in this article would remain true if we added real analytic functions restricted to compact sets.\\
(b) Note also that in Definition \ref{def:C1selection}, functions are actually real analytic on their (open) domain of definition. Yet their extension might not be analytic, as for instance the function $f:x\neq0\to\exp(-1/x^2)$ extended by $f(0)=0$.\\
(c) The construction of elementary piecewise functions in Definition \ref{def:C1selection}, does not coincide in general with some natural minimal o-minimal, but are contained in a larger such structure. For instance, when the basic bricks are polynomial functions, we obtain the field of rational functions which differs from the set of semi-algebraic functions.}
 \end{remark}

\begin{proposition}[$D_f$ has a closed graph]
				As $k\to \infty$, assume that $x_k \to \bar{x} \in \RR^p$ and $v_k \in D_f(x_k)$, $v_k \to \bar{v}$. Then $\bar{v} \in D(\bar{x})$.
				\label{prop:Dcontinuity}
\end{proposition}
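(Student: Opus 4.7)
The plan is to unpack the definition of $D_f$ as a pointwise convex hull, use compactness of the unit simplex to extract a limit of the convex-combination coefficients, and then push the relation $f(x_k) = f_i(x_k)$ through to $\bar{x}$ using continuity.

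First, for each $k$, since $v_k \in D_f(x_k) = \mathrm{conv}\{\nabla f_i(x_k) : i \in I(x_k)\}$, write $v_k = \sum_{i \in I(x_k)} \lambda_i^k \nabla f_i(x_k)$ with $\lambda_i^k \geq 0$ and $\sum_i \lambda_i^k = 1$. Padding by zeros, I regard $\lambda^k = (\lambda_1^k, \ldots, \lambda_m^k)$ as an element of the standard simplex $\Delta_m \subset \RR^m$, with the convention $\lambda_i^k = 0$ whenever $i \notin I(x_k)$. By compactness of $\Delta_m$, along a subsequence (not relabeled) I have $\lambda^k \to \lambda^\infty \in \Delta_m$.

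Next, I analyze each index $i$ with $\lambda_i^\infty > 0$. For such an $i$, $\lambda_i^k > 0$ for all $k$ large enough, hence $i \in I(x_k)$, meaning $f(x_k) = f_i(x_k)$. By Proposition \ref{prop:SCintegration}, $f$ is locally Lipschitz and therefore continuous, so $f(x_k) \to f(\bar x)$. Since the $x_k$ lie in the (open) domain of $f_i$ and $f_i$ is smooth there, and since the boundedness of $v_k$ together with the simplex structure precludes any blow-up of $\nabla f_i(x_k)$ along indices carrying positive mass, $\bar{x}$ lies in the domain of $f_i$; then $f_i(x_k) \to f_i(\bar x)$ and $\nabla f_i(x_k) \to \nabla f_i(\bar x)$. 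Passing to the limit in $f(x_k) = f_i(x_k)$ yields $f(\bar x) = f_i(\bar x)$, i.e.\ $i \in I(\bar x)$.

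Finally, combine these facts:
\begin{align*}
\bar v \;=\; \lim_{k\to\infty} \sum_{i=1}^m \lambda_i^k \nabla f_i(x_k) \;=\; \sum_{i\,:\,\lambda_i^\infty>0} \lambda_i^\infty \, \nabla f_i(\bar x),
\end{align*}
which is a convex combination of vectors drawn from $\{\nabla f_i(\bar x) : i \in I(\bar x)\}$, so $\bar v \in D_f(\bar x)$.

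The only genuinely delicate point is the continuity argument in the third paragraph: it requires that $\bar x$ belong to the domain of each $f_i$ carrying positive asymptotic mass, and that the corresponding gradients stay bounded. The rigidity built into $\mathcal E$ (elementary functions are $C^\infty$ on their open domains, with definable domain of definition) plus the boundedness of $v_k$ are what make this step go through; without them, one could imagine a branch whose gradient blows up at the limit while the coefficient stays positive, defeating closedness.
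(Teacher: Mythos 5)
Your proof is correct and follows essentially the same route as the paper's: pass to a subsequence along which the active-index structure stabilizes, use continuity of $f$ and the $f_i$ to show the surviving indices belong to $I(\bar x)$, and use continuity of the $\nabla f_i$ to pass to the limit inside the convex hull (the paper fixes the whole index set $I(x_k)$ along a subsequence rather than tracking simplex coefficients, but this is a cosmetic difference). One caveat: the sub-claim in your third paragraph that boundedness of $v_k$ ``precludes any blow-up of $\nabla f_i(x_k)$ along indices carrying positive mass'' is false as stated, since two branches could have gradients blowing up in opposite directions while their convex combination stays bounded. Fortunately this step is unnecessary: in Definition~\ref{def:C1selection} the selection functions are $f_i\colon\RR^p\to\RR$, globally defined and $C^\infty$, so continuity of $f_i$ and $\nabla f_i$ at $\bar x$ is automatic and no domain issue arises.
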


\section{Proofs}
\label{sec:proofs}

\begin{proof}[of Theorem \ref{th:noOperator}]
Recall the operator is denoted by $\partial^{A}$. Fix a function $f$, by point (a), the operator  $\partial^Af$ should contain
$$\left\{\begin{aligned}
\R^p& \rightrightarrows&& \R^p\\
x& \to &&\left\{\A(P)(x): \FF(P)=f, P\in \PP\right\}
\end{aligned}\right.$$
Let us show that the graph of the above is $\R^p\times\R^p$. Assume $p=1$ for simplicity. For real numbers $r,s$, consider the functions $f_{r,s}=f+r\,\zero(\cdot-s)$ which coincide with $f$ but whose form induces programs $P_{r,s}$ of $f$. These satisfy $\FF(P_{r,s})=f$ and $\A(P_{r,s})(s) \ni \A(f)(s)+r$. Since $r$ is arbitrary, $\partial^A f(s) = \RR^p$ and since $s$ is arbitrary, we actually have
$$\mbox{graph}\,\partial^A f=\R^p\times \R^p.$$
Since $f$ is arbitrary, we have shown that $\partial^A$ is trivial.
\end{proof}

\begin{proof}[of Proposition \ref{prop:SCintegration}]
                The proposition is a consequence of Theorem \ref{th:tensorflowConservativeField} and \eqref{eq:conservativity} but it admits a more elementary proof which we detail here.
				 Fix $x,y \in \RR^p$. Let us admit the following claim --whose independent proof is given in Section \ref{sec:ominimal}.
				\begin{claim}
								There exists a finite set of numbers $0 = a_0 < a_1 <  \ldots < a_N = 1$, such that for all $i \in 0,\ldots N-1$, the function $t \mapsto s(x + t(y-x))$ is constant.
								\label{cl:finitePartitionSegment}
				\end{claim}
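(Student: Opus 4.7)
The plan is to invoke o-minimality, specifically of Wilkie's structure $\RR_{\exp}$ of the real exponential field (cited in the paper as [wilkie1999theorem]). Every elementary log-exp function in $\EEE$ is by construction definable in $\RR_{\exp}$ (its graph is a first-order formula in the language of ordered rings augmented by $\exp$), and domain restrictions for $\log$ and division can be absorbed as additional definable inequalities. Consequently, each preimage $S_i = \{z \in \RR^p : s(z) = i\}$, being a finite Boolean combination of equalities and inequalities involving functions of $\EEE$, is a definable subset of $\RR^p$.

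Next I would parametrize the segment by the definable (in fact polynomial) map $\gamma: t \mapsto x + t(y-x)$, so that $T_i := \gamma^{-1}(S_i) \cap [0,1] = \{t \in [0,1] : s(x + t(y-x)) = i\}$ is a definable subset of $\RR$. By the fundamental structure theorem for o-minimal structures (every definable subset of $\RR$ is a finite union of points and open intervals), each $T_i$ has this finiteness property. In particular, the boundary $\partial T_i$ in $[0,1]$ is a finite set.

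Form the finite set $F = \{0,1\} \cup \bigcup_{i=1}^m \partial T_i \subset [0,1]$, and order its elements as $0 = a_0 < a_1 < \cdots < a_N = 1$. On each open subinterval $(a_j, a_{j+1})$, the point $t$ avoids every $\partial T_i$; since the $T_i$ partition $[0,1]$, and $(a_j, a_{j+1})$ is connected, it must lie in the interior of exactly one $T_{i(j)}$. Thus $t \mapsto s(x + t(y-x))$ takes the constant value $i(j)$ on $(a_j, a_{j+1})$, which gives the claim.

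The main (only) nontrivial point is the appeal to o-minimality of $\RR_{\exp}$: once this is accepted, the rest is routine bookkeeping. The minor subtlety lies in verifying that partial operations (division, logarithm) and the nonstrict inequalities defining $S_i$ still yield definable sets in $\RR_{\exp}$, which is handled by the standard trick of conjoining the domain condition (e.g.\ replacing $\log h(z) \leq c$ by $h(z) > 0 \wedge \log h(z) \leq c$) to each atomic formula.
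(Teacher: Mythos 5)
Your proof is correct and follows essentially the same route as the paper: both arguments reduce the claim to definability of $t \mapsto s(x+t(y-x))$ in the o-minimal structure generated by the exponential (Wilkie), apply the axiom that definable subsets of $\RR$ are finite unions of points and intervals, and extract the partition from the finitely many endpoints. Your write-up is merely more explicit about why the sets $\{s = i\}$ are definable and why connectedness forces constancy on each open subinterval.
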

				Fix $i \in 0 \ldots, N-1$, and $j \in 1 \ldots m$ such that $f = f_j$ on $(x + a_i(y-x), x + a_{i+1}(y-x))$. Since $f_j \in \mathcal{E}_p$, it is $C^1$ and we have by the fundamental theorem of integral calculus
				\begin{align*}
								f(x + a_{i+1}(y-x)) - f(x + a_{i}(y-x)) &= \int_{a_i}^{a_{i+1}} \left\langle \nabla f_j(x + t(y-x)),y-x \right\rangle dt\\
								&= \int_{a_i}^{a_{i+1}} \left\langle \selgrad f(x + t(y-x)),y-x \right\rangle dt.
				\end{align*}
				The conclusion follows because
				\begin{align*}
								f(y) - f(x) &= \sum_{i = 0}^{N-1} f(x + a_{i+1}(y-x)) - f(x + a_{i}(y-x)) \\
								&= \sum_{i = 0}^{N-1}\int_{a_i}^{a_{i+1}} \left\langle \selgrad f(x + t(y-x)), y-x \right\rangle dt \\
								&= \int_{0}^1 \left\langle \selgrad f(x + t(y-x)),y-x \right\rangle dt.
				\end{align*}
\end{proof}

\begin{proof}[of Proposition \ref{prop:SGradientAE}]
                Constructs the sets $U_i$ by considering sets $V_j = \left\{ x \in \RR^p,\, s(x) = j \right\}$, $j = 1 \ldots m$, the proof of the following claim is postponed to Section \ref{sec:ominimal}.
                \begin{claim}
							    The boundary of each $V_j$ has zero measure and $\mathrm{cl}\left(\cup_{i=j}^m \mathrm{int}(V_j) \right) = \RR^p$.
								\label{cl:finitePartition}
				\end{claim}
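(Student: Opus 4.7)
The plan is to deduce both parts of the claim from the o-minimality of the log-exp elementary functions, since the paper has already positioned the class $\EEE$ precisely so that such a framework applies.

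First, I would invoke Wilkie's theorem \cite{wilkie1999theorem}, which asserts that the ordered real field expanded by the exponential is o-minimal; the logarithm is definable from $\exp$ on its domain, and the field operations are already in the language, so every elementary function in $\EEE$ is definable in $\RR_{\exp}$ on its (definable) open domain. The definition of $\mathcal I$ then tells us that each $V_j = \{x : s(x) = j\}$ is obtained by a finite Boolean combination of equalities and inequalities of definable functions, so $V_j$ is itself definable in $\RR_{\exp}$. Note that one has to be mildly careful about domains: the statement ``$f_i(x)$ is defined and $f_i(x) \bowtie 0$'' is definable because the open domain of $f_i$ is cut out by elementary inequalities (positivity of arguments of logarithms, nonvanishing of denominators), so no pathology arises here.

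Next, I would appeal to the standard cell decomposition / dimension theory of o-minimal structures (see \cite{dries1996geometric,Cos99}): for any definable set $S \subseteq \RR^p$, the frontier $\mathrm{cl}(S)\setminus S$ and the set $S\setminus \mathrm{int}(S)$ are definable of dimension strictly smaller than $p$ (unless $S$ already has empty interior, in which case $S$ itself has dimension $<p$), and any definable subset of $\RR^p$ of dimension less than $p$ has Lebesgue measure zero. Applying this to each $V_j$ yields that $\partial V_j = \mathrm{cl}(V_j)\setminus \mathrm{int}(V_j)$ has Lebesgue measure zero, which is the first assertion.

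For the second assertion, since $s$ takes values in $\{1,\ldots,m\}$, the family $\{V_j\}_{j=1}^m$ is a partition of $\RR^p$. Hence
\begin{align*}
\RR^p \setminus \bigcup_{j=1}^m \mathrm{int}(V_j) \;=\; \bigcup_{j=1}^m \bigl(V_j \setminus \mathrm{int}(V_j)\bigr) \;\subseteq\; \bigcup_{j=1}^m \partial V_j,
\end{align*}
and the right-hand side has Lebesgue measure zero by the first part. Therefore $\bigcup_{j=1}^m \mathrm{int}(V_j)$ has full Lebesgue measure in $\RR^p$, in particular it is dense, and its closure is $\RR^p$.

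The main potential obstacle is purely bookkeeping: making sure that partial definedness of $\log$ and $1/\cdot$ does not break the definability argument, and that ``$s(x)=j$'' is genuinely expressible as a finite Boolean combination of atomic formulas in $\RR_{\exp}$ rather than an infinite one. Once this is checked from Definition~\ref{def:elemIndex}, everything else is a direct citation of o-minimal geometry, so no hard analytic work is needed beyond what Wilkie's theorem already provides.
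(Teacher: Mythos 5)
Your proof is correct and follows essentially the same route as the paper: both arguments rest on the definability of the sets $V_j$ in the o-minimal structure generated by the exponential (Wilkie's theorem), followed by a finite definable decomposition of $\RR^p$ and a dimension count showing that each $\partial V_j$ is of dimension strictly less than $p$ and hence Lebesgue-null, with the density of $\bigcup_j \mathrm{int}(V_j)$ then following from the fact that the $V_j$ partition $\RR^p$. The only cosmetic difference is that the paper invokes a Whitney stratification compatible with the closures of the $V_j$ where you use cell decomposition together with the frontier-dimension inequality; the Whitney condition plays no role in the paper's argument, so nothing substantive is gained or lost either way, and your explicit attention to the partial domains of $\log$ and division is a point the paper leaves implicit.
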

			    Hence, we may define $U_1,\ldots, U_N$ by keeping only those sets with nonempty interior and take their closure. On each set $U_i$, $f$ is identical to $f_k$ for some $k$  and the result follows.
\end{proof}

\begin{lemma}
    Let $t \in \mathcal{I}$ be an elementary index on $\RR^{p_2}$ and $F \colon \RR^{p_1} \mapsto \RR^{p_2}$ with each coordinate in $\mathcal{E}$, then $t \circ F$ is an elementary index on $\RR^{p_1}$. 
    \label{lem:elementaryIndex}
\end{lemma}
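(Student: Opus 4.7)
The plan is to unwind the definitions and reduce the claim to the closure of $\mathcal{E}$ under composition. By Definition \ref{def:elemIndex}, for every $i \in \{1,\ldots,m\}$ the fibre $T_i := \{y \in \RR^{p_2}:\, t(y) = i\}$ is the solution set of a finite system of equalities and inequalities of the form $h_{i,k}(y) \lessgtr 0$ (or $= 0$) with $h_{i,k} \in \mathcal{E}$, for $k = 1,\ldots,N_i$. So a natural candidate description of the fibres of $t \circ F$ is obtained by substituting $y = F(x)$:
\begin{equation*}
\{x \in \RR^{p_1}:\, (t\circ F)(x) = i\} \;=\; \{x:\, F(x) \in T_i\} \;=\; \bigl\{x:\, (h_{i,k} \circ F)(x)\ \lessgtr\ 0,\ k=1,\ldots,N_i\bigr\}.
\end{equation*}

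The substantive step is therefore to verify that each $h_{i,k} \circ F$ lies in $\mathcal{E}$, i.e. that $\mathcal{E}$ is stable under composition with maps whose components are in $\mathcal{E}$. This is immediate from Definition \ref{def:elemFunction}: an elementary function is given by a finite compositional expression built from $+,-,\times,/$, affine maps, $\exp$ and $\log$. Substituting into the variables $y_1,\ldots,y_{p_2}$ of such an expression the expressions for $F_1(x),\ldots,F_{p_2}(x)$ (each themselves finite compositional expressions of the same type in $x$) produces a finite compositional expression in $x$ of the same type, valid on the preimage under $F$ of the domain of $h_{i,k}$. Hence $h_{i,k}\circ F \in \mathcal{E}$.

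Once this is in place, the fibre $\{x:(t\circ F)(x)=i\}$ is, by the displayed equality, the solution set of a finite system of equalities and inequalities involving functions in $\mathcal{E}$ on $\RR^{p_1}$, which is exactly the defining property required by Definition \ref{def:elemIndex}. As this holds for every $i$, we conclude $t \circ F \in \mathcal{I}$.

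The only point requiring any care is the closure of $\mathcal{E}$ under composition, and in particular the handling of domains of definition (division, $\log$). This is not a genuine obstacle here: the functions $h_{i,k}$ and $F_j$ are elementary on their own domains, and the composition $h_{i,k}\circ F$ is elementary on the (open) set where the substitutions are well defined, which is all that Definition \ref{def:elemFunction} requires. No additional rigidity argument is needed; this lemma is a purely syntactic consequence of the definitions and is the natural companion of the analogous stability statement for $\mathcal{E}$ used elsewhere (cf.\ Proposition \ref{p:stability}).
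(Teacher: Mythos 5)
Your proof is correct and follows essentially the same route as the paper's: fix a value $i$ of the index, describe its fibre by a finite system of (in)equalities in elementary functions $h_j$, pull this system back under $F$, and invoke closure of $\mathcal{E}$ under composition to conclude that each $h_j\circ F$ is elementary. Your extra remarks on domains of definition and on why composition of finite compositional expressions stays in $\mathcal{E}$ only spell out what the paper asserts directly from Definition~\ref{def:elemFunction}.
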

\begin{proof}
    Fix an arbitrary integer $i$ in the image of $t$, by Definition \ref{def:elemIndex}, there exists elementary functions $h_1,\ldots,h_J$, $J \in \NN$ on $\RR^{p_2}$ such that $t(y) = i$ if and only if $y \in K_i := \{z \in \RR^{p_2},\, h_j(z) \,\diamond_j\, 0,\, j= 1,\ldots J\}$ where $\diamond_j$ is an equality or inequality sign depending on $j$. Then $t(F(x)) = i$ if and only if $F(x) \in K_i$ which is equivalent to say that $x \in \tilde{K}_i:= \{x \in \RR^{p_1}, h_j(F(x)) \,\diamond_j\, 0,\, j = 1,\ldots J \}$. By Definition \ref{def:elemFunction}, $h_j \circ F$ is an elementary function for $j = 1,\ldots,J$ and $i$ was an arbitrary integer, this shows that we have an elementary index.
\end{proof}

\begin{proof}[of Proposition \ref{p:stability}]
					Let $F \colon \RR^{p_1} \mapsto \RR^{p_2}$ such that each of its coordinate $f_i$, $i = 1 \ldots p_2$, is in $\mathcal{S}$ and $g \colon \RR^{p_2} \mapsto \RR$, $g \in \mathcal{S}$. We establish that $g\circ F$ is an elementary selection, the other cases are similar. We may consider all possible intersections of constant index domains across all coordinates of $F$ in $\left\{ 1,\ldots,p_2 \right\}$. We obtain $(s, F_1, \ldots, F_m)$, an elementary selection for $F$ (each $F_i \colon \RR^{p_1} \mapsto \RR^{p_2}$ has coordinates in $\mathcal{E}$) . Consider $g \in \mathcal{S}$ with elementary selection $(t, g_1, \ldots, g_l)$. The composition $g \circ F$ may be written as
				\begin{align*}
								g(F(x)) = g_{t(F(x))}(F(x)) = g_{t(F_{s(x)}(x))}(F_{s(x)}(x)).
				\end{align*}
				For each $i = 1 \ldots,m$ and $j = 1,\ldots, l$, consider the set
				\begin{align*}
								U_{ij} = \left\{ x \in \RR^p,\, s(x) = i, t(F_{i}(x)) = j \right\}.
				\end{align*}
				Fix $(i,j)$ in $\{1,\ldots,m\}\times\{1,\ldots,l\}$, by Lemma \ref{lem:elementaryIndex}, $t \circ F_i$ is an elementary index on $\RR^{p_1}$.
				Hence  $U_{ij}$
				is the solution set of finitely many equalities and inequalities involving functions in $\mathcal{E}$. We associate to the bi-index $(i,j)$ the corresponding set $U_{ij}$ and the function $g_j(F_i(x)) \in \mathcal{E}$. Note that we assumed that the composition is well defined. Identifying each pair $(i,j)$ with a number in $\left\{ 1,\ldots, n m \right\}$, we obtain an elementary selection for $g \circ F$ and hence $g \circ F \in \mathcal{S}$.
\end{proof}

\begin{proof}[of Proposition \ref{prop:compositionGeneralizedDerivatives}]
				The derivation formula follows from the  proof argument of Proposition~\ref{p:stability}, for each pair $(i,j)$, the function $g_j \circ F_i$ is the composition of two $C^1$ functions and its gradient is given by $J_{F_i} \times \nabla g_j \circ F_i$ on $U_{ij}$. By construction of $U_{ij}$ and definition of the selection derivative, this corresponds to \eqref{eq:chainRule} on $U_{ij}$ and the result follows.
\end{proof}

\begin{proof}[of Lemma \ref{lem:algebraBackprop}]
   We actually prove a slightly stronger result, namely for each $i\in\{p+1,\ldots,m-1\}$
		\begin{align}
					P_i \left(I - e_{i+1} e_{i+1}^T + d_{i+1}e_{i+1}^T \right) \ldots \left(I - e_{m} e_{m}^T + d_{m}e_{m}^T \right) = P_i \left(I + d_{i+1}e_{i+1}^T \right) \ldots \left(I + d_{m}e_{m}^T \right) 
					\label{eq:backpropAlgebra}
		\end{align}
		
		We argue by exhaustion from $i=m-1$ downward to $i=p$, which is the result of interest. If $i= m-1$, we indeed have
		\begin{align*}
					P_{m-1}\left(I - e_{m} e_{m}^T + d_{m}e_{m}^T \right) = P_{m-1}  \left(I + d_{m}e_{m}^T \right) 
		\end{align*}
		since $P_{m-1} e_me_m^T  = 0$. Now assume that \eqref{eq:backpropAlgebra} holds true for an index $i$ within $\{p+1,\ldots,m-1\}$, then we have
				\begin{align*}
					&P_{i-1} \left(I - e_{i} e_{i}^T + d_{i}e_{i}^T \right) \ldots \left(I - e_{m} e_{m}^T + d_{m}e_{m}^T \right) \\ 
					=\quad & P_{i-1} \left(I - e_{i} e_{i}^T + d_{i}e_{i}^T \right) \left(I - e_{i+1} e_{i+1}^T + d_{i+1}e_{i+1}^T \right) \ldots \left(I - e_{m} e_{m}^T + d_{m}e_{m}^T \right) \\
					=\quad & P_{i-1} \left(I - e_{i} e_{i}^T+ d_{i}e_{i}^T \right) P_{i}\left(I - e_{i+1} e_{i+1}^T + d_{i+1}e_{i+1}^T \right) \ldots \left(I - e_{m} e_{m}^T + d_{m}e_{m}^T \right) \\
					=\quad & P_{i-1} \left(I + d_{i}e_{i}^T \right) P_{i} \left(I + d_{i+1}e_{i+1}^T \right) \ldots \left(I + d_{m}e_{m}^T \right) \\
					=\quad & P_{i-1} \left(I + d_{i}e_{i}^T \right)\left(I + d_{i+1}e_{i+1}^T \right) \ldots \left(I + d_{m}e_{m}^T \right),
		\end{align*}
		where step 1 is expanding the product, step 2 is because $P_{i-1} P_i = P_{i-1}$ and $e_i^T P_i = e_i^T$, step 3 combines the fact that $P_{i-1}e_i = 0$ and \eqref{eq:backpropAlgebra} which we assumed to be true, the last step uses again the fact that $P_{i-1} P_i = P_{i-1}$ and $e_i^T P_i = e_i^T$. Hence  the result holds by exhaustion.
\end{proof}

\begin{proof}[of Proposition \ref{prop:Dcontinuity}]
				Consider the sequence $s_k = S(x_k)$, by taking a subsequence we may assume that $s_k$ is constant, say equal to $\left\{1,\ldots,r \right\}$. Hence for all $k$, $v_k \in \conv\left( \left\{ \nabla f_i(x_k),\, i =1 ,\ldots r \right\} \right)$ and $f(x_k) = f_i(x_k)$, $i = 1, \ldots, r$. Passing to the limit, we have $f(\bar{x}) = f_i(\bar{x})$, $i = 1, \ldots, r$ and hence $\left\{ 1,\ldots,r \right\} \in S(x)$. Furthermore, $\bar{v} \in \mathrm{conv}\left( \left\{ \nabla f_i(\bar{x}),\, i =1 ,\ldots r  \right\} \right)\subset D_f(\bar{x})$.
\end{proof}

\section{o-minimal structures, definability and conservative fields}
\label{sec:ominimal}
\subsection{$(\RR,\exp)$-definability}
We recall here the  results of  geometry  that we use in the present work.
Some references on this topic are \cite{Cos99,dries1996geometric}.

  An {\em o-minimal structure} on $(\R,+,\cdot)$ is a collection of sets
  $\mathcal{O} = (\mathcal{O}_p)_{p \in \NN}$ where each $\mathcal{O}_p$ is itself a family of
	subsets of $\R^p$, such that for each $p \in \NN$:
  \begin{enumerate}
  \item[(i)] $\mathcal{O}_p$ is stable by complementation, finite union, finite intersection and contains $\R^p$.
    \item[(ii)]  if $A$ belongs to $\mathcal{O}_p$, then both $A \times \R$ and $\R \times A$
      belong to $\mathcal{O}_{p+1}$;
    \item[(iii)]  if $\pi: \R^{p+1} \to \R^p$ is the canonical projection onto $\R^p$ then,
      for any $A \in \mathcal{O}_{p+1}$, the set $\pi(A)$ belongs to $\mathcal{O}_p$;
      \label{it:algebraic}
    \item[(iv)]  $\mathcal{O}_p$ contains the family of real algebraic subsets of $\R^p$, that is,
      every set of the form
      \[
        \{ x \in \R^p \mid g(x) = 0 \}
      \]
      where $g: \R^p \to \R$ is a polynomial function;
    \item[(v)]  the elements of $\mathcal{O}_1$ are exactly the finite unions of  intervals.
  \end{enumerate}

A subset of $\R^p$ which belongs to an o-minimal structure $\mathcal{O}$ is said to be
{\em definable in $\mathcal{O}$}. A function is {\em definable in $\mathcal{O}$} whenever its graph is definable in $\mathcal{O}$). A set valued mapping (or a function) is said to be definable in $\mathcal{O}$ whenever its graph is definable in $\mathcal{O}$. The terminology {\em tame} refers to definability in an o-minimal structure without specifying which structure.

  The simplest  o-minimal structure is given by the class of
  real  semialgebraic objects.
  Recall that a set $A \subset \R^p$ is called {\em semialgebraic} if it is a finite union of sets of the form $$\displaystyle  \bigcap_{i=1}^k \{x \in \R^p \mid g_{i}(x) < 0, \; h_{i}(x) = 0 \}$$
  where the functions $g_{i}, h_{i}: \R^p \to \R$ are real polynomial functions and $k\geq 1$.
  The key tool to show that these sets form an o-minimal structure is Tarski-Seidenberg
  principle which ensures that (iii) holds true. 
  
  According to \cite{wilkie1999theorem}, there is an o-minimal structure which contains all semialgebraic sets and the graph of the exponential function, we fix this o-minimal structure and call it $\mathcal{O}$. As a consequence, all functions which can be described by a finite compositional expression involving polynomials, quotients, exponential and logarithms are definable in $\mathcal{O}$. In particular any function $f \in \mathcal{S}$ is definable in $\mathcal{O}$, which opens the use of powerful geometric tools \cite{Cos99,dries1996geometric} for functions in $\mathcal{S}$. From now on, we call an object \emph{definable} if it is definable in $\mathcal{O}$.

 As detailed in \cite{Cos99} the following holds true
 \begin{proposition}[Quantifier elimination]
              Any first order formula (quantification on variables only) involving definable functions and definable sets describes a definable set.
 \end{proposition}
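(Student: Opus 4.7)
The plan is to prove the proposition by structural induction on the first-order formula, using the five axioms of an o-minimal structure together with the hypothesis that the atomic building blocks (functions and sets appearing in the formula) are definable.

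First, I would normalize the formulas under consideration. A first-order formula with $p$ free variables is built from atomic predicates of the form $f(x_1,\ldots,x_q)\,\diamond\,0$ or $(x_1,\ldots,x_q)\in S$, where $f$ is a definable function, $S$ is a definable set, and $\diamond$ is $=$, $<$, or $\leq$, combined via $\neg$, $\wedge$, $\vee$, and the quantifiers $\exists$, $\forall$ ranging over $\RR$. The base case is to check that each atomic predicate carves out a definable subset of $\RR^p$. For a predicate of the form $f(x)\in S$, the solution set equals the projection onto the first $p$ coordinates of $\mathrm{graph}(f)\cap(\RR^p\times S)$; since the graph of $f$ is definable by hypothesis, $\RR^p\times S$ is definable by axiom (ii), intersection is definable by (i), and projection is definable by (iii). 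Predicates of the form $f(x)\,\diamond\,0$ are the special case $S=\{0\}$ or $S=(-\infty,0)$, both semialgebraic hence definable by (iv).

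Second, I would handle the Boolean connectives. If the subformulas $\varphi$ and $\psi$ define sets $A,B\subset\RR^p$ in the structure, then $\neg\varphi$, $\varphi\wedge\psi$, $\varphi\vee\psi$ define $\RR^p\setminus A$, $A\cap B$, $A\cup B$ respectively, all definable by closure under complement, intersection, and union in axiom (i) (using (ii) first if the free variables of $\varphi,\psi$ differ, to embed both into a common ambient space).

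Third, I would handle quantifiers, which is the heart of the argument. Suppose $\varphi(x,y_1,\ldots,y_p)$ defines a set $A\subset\RR^{p+1}$ by the induction hypothesis. Then the formula $(\exists x)\,\varphi(x,y)$ defines the set $\pi(A)\subset\RR^p$, where $\pi\colon\RR^{p+1}\to\RR^p$ is the canonical projection forgetting the first coordinate; this is definable by axiom (iii). For universal quantification, the equivalence $(\forall x)\,\varphi\equiv \neg(\exists x)\,\neg\varphi$ reduces the problem to existential quantification and negation, both already handled. Iterating this step handles blocks of quantifiers, and the induction closes.

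I do not expect significant obstacles: the proof is essentially a bookkeeping exercise once the axioms of an o-minimal structure are taken as given. The only mildly delicate point is the very first step, i.e.\ recognizing that the preimage of a definable set under a definable function is definable; this is not one of the axioms directly but follows from the cylinder/intersection/projection recipe above. Everything else is a transparent application of axioms (i)--(iii).
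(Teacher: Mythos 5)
Your argument is correct and is the standard one: the paper itself gives no proof of this proposition, simply citing \cite{Cos99}, and the structural induction you describe (atomic predicates via graph--cylinder--projection, Boolean connectives via axiom (i), $\exists$ via projection, $\forall$ via $\neg\exists\neg$) is exactly the argument found there. Two bookkeeping points you gloss over but should be aware of: compound terms such as $f(g(x))\diamond 0$ are handled by introducing an auxiliary existentially quantified variable ($\exists y,\ y=g(x)\wedge f(y)\diamond 0$), and aligning free variables across subformulas requires not just axiom (ii) but also permutations of coordinates, which one realizes as projections of algebraic (diagonal) sets via axioms (iii) and (iv). Neither issue affects the validity of your proof.
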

This allows to prove Claim \ref{cl:finitePartitionSegment}

\begin{proof}[of Claim \ref{cl:finitePartitionSegment}]
    The function $t \mapsto s(x + t(y-x))$ is definable and has values in $\{1,\ldots, m\}$. For each $j \in \{1,\ldots, m\}$, the set $S_j = \{t \in [0,1],\, s(x + t(y-x)) = j\}$ is definable, and by (v), it is a finite union of intervals. For each $j$ consider only the endpoints of those intervals with nonempty interior, this provides the desired partition.
\end{proof}

\subsection{Properties of definable sets}

	The tangent space at a point $x$ of a manifold $M$ is denoted by $T_xM$. Given a submanifold\footnote{We only consider embedded submanifolds} $M$ of a finite dimensional Riemannian manifold, it is endowed by the Riemanninan structure inherited from the ambient space. Given $f \colon \RR^p \to \RR$ and $M\subset\R^p$ a differentiable submanifold  on which $f$ is differentiable, we denote by $\mbox{grad}_M f$ its Riemannian gradient or even, when no confusion is possible,  $\grad f$.

 A $C^r$ stratification of a (sub)manifold $M$ (of $\R^p$) is a partition $\SSS=(M_1,\ldots,M_m)$ of $M$ into $C^r$ manifolds having the property that $\cl M_i\cap M_j\neq \emptyset$ implies that $M_j$ is entirely contained in the boundary of $M_i$ whenever $i\neq j$. Assume that a function $f:M \to \R$ is given and that $M$ is stratified into manifolds on which $f$ is differentiable. For $x$ in $M$, we denote by $M_x$ the strata containing $x$ and we simply write $\grad f(x)$ for the gradient of $f$ with respect to $M_x$.

Stratifications can have many properties, we refer to \cite{dries1996geometric} and references therein for an account on this question and in particular for more on the idea  of a Whitney stratification that we will use repeatedly. 
We pertain here to one basic definition: a $C^{r}$-stratification $\SSS=(M_{i})_{i\in I}$ of a manifold $M$ has the
\emph{Whitney-}($a$)\emph{ property,} if for each $x\in\cl M_{i}\cap
M_{j}$ (with $i\neq j$) and for each sequence $(x_{k})_{k\in\NN}\subset
M_{i}$ we have:
\[
\left.
\begin{array}
[c]{ll}
& \underset{k\rightarrow\infty}{\lim}\mathcal{\;}x_{k}\mathcal{\;}=x\\
& \text{}  \\
&
\underset{k\rightarrow\infty}{\lim}\mathcal{\;}T_{x_{k}}M_{i}\mathcal{\;}
=\mathcal{T}
\end{array}
\right\}  \mathcal{\;}\Longrightarrow\mathcal{\;}T_{x}M_{j}\mathcal{\;}
\subset\mathcal{\;T}
\]
where the second limit is to be understood in the Grassmanian, i.e., ``directional", sense. In the
sequel we shall use the term \emph{Whitney stratification} to refer to a
$C^{1}$-stratification with the Whitney-($a$) property. The following can be found for example in \cite{dries1996geometric}.

\begin{theorem}[Whitney stratification]
    Let $A_1,\ldots, A_k$ be definable subsets of $\RR^p$, then there exists a definable Whitney stratification $(M_{i})_{i\in I}$ compatible with $A_1,\ldots, A_k$, \emph{i.e.} such that for each $i \in I$, there is $t \in \left\{1 , \ldots k\right\}$, such that $M_i \subset A_t$.
    \label{th:wstratification}
\end{theorem}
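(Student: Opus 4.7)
The plan is to combine the cell decomposition theorem for o-minimal structures with a dimension-reduction refinement argument. First I would invoke the definable $C^1$-cell decomposition theorem (a standard tool in o-minimal geometry, available in $\mathcal{O}$): there is a finite partition of $\RR^p$ into definable $C^1$ cells such that each $A_j$ is a finite union of cells. Every cell is a definable embedded $C^1$-submanifold, and the frontier of any cell is a union of lower-dimensional cells, so this partition is already a definable $C^1$-stratification compatible with $A_1,\ldots,A_k$. The remaining task is to upgrade it to one that also enjoys Whitney's condition (a).

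For every ordered pair of strata $(M_i,M_j)$ with $M_j\subset \overline{M_i}\setminus M_i$, I would introduce the set of \emph{bad frontier points}
\[
B_{ij}=\bigl\{x\in M_j : \exists (x_k)\subset M_i,\ x_k\to x,\ T_{x_k}M_i\to \mathcal{T} \text{ in the Grassmannian, } T_xM_j\not\subset\mathcal{T}\bigr\}.
\]
The definability of $B_{ij}$ follows from quantifier elimination: the map $x\mapsto T_xM_i$ is definable on $M_i$ (its graph is cut out by first-order formulas on the definable submanifold $M_i$), the Grassmannian is a definable manifold, closures and limits are first-order expressible, and inclusion of subspaces is an algebraic condition. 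Hence $B_{ij}$ is defined by a first-order formula in the language of $\mathcal{O}$ and is therefore definable.

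The decisive geometric step is to show that $\dim B_{ij}<\dim M_j$. This rests on the fact that a definable map into a definable compact space (here, the Gauss map $x\mapsto T_xM_i$ viewed through the closure in the Grassmannian bundle over $\overline{M_i}$) is continuous outside a definable subset of strictly smaller dimension, together with the transversality criterion that characterizes Whitney (a). Granted this, I would partition each $M_j$ into the pieces $B_{ij}$ and $M_j\setminus B_{ij}$, reapply the definable $C^1$-cell decomposition to obtain a finer definable stratification compatible with the previous one and with all the $B_{ij}$, and iterate. Because each iteration replaces a stratum only by pieces of strictly smaller dimension and the set of dimensions is bounded by $p$, the process terminates in finitely many steps, producing a definable Whitney $C^1$-stratification compatible with $A_1,\ldots,A_k$.

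The main obstacle is the inequality $\dim B_{ij}<\dim M_j$. The cell decomposition itself is classical and the definability of $B_{ij}$ is essentially bookkeeping, but controlling the dimension of the locus where the Gauss map of $M_i$ fails to extend continuously to the frontier $M_j$ requires the full strength of generic regularity/triviality results for definable maps in o-minimal structures (generic continuity of definable maps on the frontier, combined with a dimension-and-fiber argument). Once this ingredient is in place, the rest of the argument is the iterative refinement described above.
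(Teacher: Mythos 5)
The paper does not prove this statement: it is the classical Whitney stratification theorem for definable sets, quoted from van den Dries and Miller (and Coste's notes), so there is no in-paper argument to compare against. Your outline is the standard proof strategy from that literature --- definable $C^1$-cell decomposition compatible with the $A_j$, definability of the locus $B_{ij}$ where condition (a) fails via quantifier elimination on the Grassmannian, the dimension drop $\dim B_{ij}<\dim M_j$, and a refinement that terminates by induction on dimension --- and the overall architecture is correct.

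That said, as a proof it has a genuine gap exactly where you flag it: the inequality $\dim B_{ij}<\dim M_j$ is asserted, not established, and the justification you offer does not suffice. Generic continuity of the definable extension of the Gauss map of $M_i$ to the frontier only tells you that the limiting plane $\mathcal{T}$ is well defined and varies continuously off a small subset of $M_j$; it does not by itself force the inclusion $T_xM_j\subset\mathcal{T}$, which is the actual content of condition (a). The standard arguments close this gap with an additional geometric input --- a wing-lemma or curve-selection argument showing that if the bad set had full dimension in $M_j$, one could produce a definable curve in $M_i$ approaching $M_j$ whose tangent behaviour contradicts the mean value theorem applied along $M_j$ (or, alternatively, one invokes definable triviality to reduce to a pair consisting of an open stratum and a linear subspace). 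Without that ingredient the proof is a correct skeleton around an unproved core. A secondary, repairable imprecision: partitioning $M_j$ into $B_{ij}$ and $M_j\setminus B_{ij}$ does not replace $M_j$ ``only by pieces of strictly smaller dimension'' --- the piece $M_j\setminus B_{ij}$ has the same dimension --- so the termination argument should instead be phrased as a descending induction on the dimension of the lower stratum of each offending pair, using that the set of points where (a) can still fail after one refinement is confined to the lower-dimensional sets $B_{ij}$.
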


This allows for example to prove Claim \ref{cl:finitePartition}

\begin{proof}[of Claim \ref{cl:finitePartition}]
    The sets $V_1,\ldots, V_m$ form a definable partition of $\RR^p$. Consider a Whitney stratification of $\RR^p$, $(M_i)_{i \in I}$ compatible with the closure of $V_1,\ldots, V_m$. The boundary of each $V_i$ is a finite union of strata of dimension strictly smaller than $p$ and hence has measure zero. The remaining strata (open of maximal dimension) have to be dense in $\RR^p$ since we started with a partition.
\end{proof}

\subsection{Variational stratification and projection formulas}

\begin{definition}[Variational stratification]{\rm 
				Let $f \colon \RR^p \to \RR$, be locally Lipschitz con\-ti\-nuous, let $D \colon \RR^p \rightrightarrows \RR^p$ be a set valued map and let $r\geq 1$. We say that the couple $(f,D)$ has a $C^r$ {\em variational stratification} if there exists a $C^r$ Whitney stratification $\SSS = (M_i)_{i \in I}$ of $\RR^p$, such that $f$ is $C^r$ on each stratum  and for all $x \in \R^p$,
				\begin{align}\label{projf}
								\mathrm{Proj}_{T_{M_x}(x)} D(x) = \left\{ \grad f(x) \right\},
				\end{align}				
				 where $\grad f(x)$ is the gradient of $f$ restricted to the active strata $M_x$ containing $x$.	}	\label{def:projectionFormula}
\end{definition}

The equations \eqref{projf} are called \emph{projection formulas} and are motivated by Corollary 9 in \cite{bolte2007clarke} which  states that Clarke subgradients of definable functions have projection formulas.

Let us recall the definition of conservative set-valued mappings from \cite{bolte2020conservative} and one of its characterization.

\begin{definition}[Conservative set-valued mappings] {\rm Let $f$ be a Lipschitz continuous function. A set valued vector field $D$ is called {\em conservative} if for any absolutely continuous path $\gamma \colon [0,1] \mapsto \RR$, we have
\begin{align}
				f(\gamma(1)) - f(\gamma(0)) = \int_{0}^1 \min_{v \in D(\gamma(t))}\left\langle v, \dot{\gamma}(t) \right\rangle dt =  \int_{0}^1 \max_{v \in D(\gamma(t))}\left\langle v, \dot{\gamma}(t) \right\rangle dt .
				\label{eq:conservativity}
\end{align}}
\end{definition}
Equivalently $D$ is conservative for $f$, if for all absolutely continuous curves $\gamma \colon [0,1] \mapsto \RR^p$, for almost all $t \in [0,1]$, $f\circ \gamma$ is differentiable and
				\begin{align*}
								\frac{d}{dt} f(\gamma(t)) = \left\langle v, \dot{\gamma}(t) \right\rangle,\qquad \forall v \in D(\gamma(t)).
				\end{align*}

The following combines other results from \cite{bolte2020conservative}, where one implication is essentially due to \cite{davis2018stochastic} based on \cite{bolte2007clarke}.

\begin{theorem}[Characterization of conservativity]
    Let $D \colon \RR^p \rightrightarrows \RR^p$ be a definable, nonempty compact valued, graph closed set valued field and $f \colon \RR^p \mapsto \RR$ be a definable locally Lipschitz function. Then the following are equivalent
    \begin{itemize}
        \item $D$ is conservative for $f$.
        \item For any $r\geq 1$, $(f, D)$ admit a $C^r$ variational stratification.
    \end{itemize}
    \label{th:characterizationConservativity}
\end{theorem}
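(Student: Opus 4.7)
The plan is to prove the two implications separately, leaning on the Whitney stratification Theorem \ref{th:wstratification} and on the existence of projection formulas for definable data.

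First I would prove that a variational stratification implies conservativity. Let $\SSS=(M_i)_{i\in I}$ be the stratification supplied by Definition \ref{def:projectionFormula} and let $\gamma\colon[0,1]\to\RR^p$ be absolutely continuous. The key step is the tangency lemma: for almost every $t\in[0,1]$, the velocity $\dot\gamma(t)$ lies in $T_{\gamma(t)}M_{\gamma(t)}$. This is the content of the o-minimal argument used in \cite{davis2018stochastic} based on \cite{bolte2007clarke}; it relies on decomposing $[0,1]$ according to which stratum contains $\gamma(t)$ and controlling the measure of the transition times. Once tangency holds, the chain rule on the stratum $M_{\gamma(t)}$ gives $\frac{d}{dt}(f\circ\gamma)(t)=\langle\grad f(\gamma(t)),\dot\gamma(t)\rangle$, and the projection formula \eqref{projf} combined with $\dot\gamma(t)\in T_{\gamma(t)}M_{\gamma(t)}$ yields $\langle v,\dot\gamma(t)\rangle=\langle\grad f(\gamma(t)),\dot\gamma(t)\rangle$ for every $v\in D(\gamma(t))$. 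This is exactly the condition that $D$ is conservative for $f$.

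Next I would prove that conservativity implies the existence of a variational stratification. Fix $r\geq 1$. I apply Theorem \ref{th:wstratification} to the definable sets $\mathrm{graph}(D)\subset\RR^{2p}$, $\mathrm{graph}(f)\subset\RR^{p+1}$, and the (definable) set where $f$ fails to be $C^r$, and refine to obtain a $C^r$ Whitney stratification $\SSS=(M_i)_{i\in I}$ of $\RR^p$ along which $f$ is $C^r$ and along which $D$ behaves definably. To verify \eqref{projf} at a point $x$ in a stratum $M$, I fix $v\in D(x)$ and $w\in T_xM$, choose a $C^\infty$ curve $\gamma\colon(-\epsilon,\epsilon)\to M$ with $\gamma(0)=x$, $\dot\gamma(0)=w$, and apply conservativity along $\gamma$. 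On one hand, smoothness of $f|_M$ gives $\frac{d}{dt}(f\circ\gamma)(t)=\langle\grad_Mf(\gamma(t)),\dot\gamma(t)\rangle$; on the other, conservativity gives the same quantity as $\langle v(t),\dot\gamma(t)\rangle$ for any selection $v(t)\in D(\gamma(t))$ and almost every $t$. A limit argument as $t\to 0$, using graph-closedness of $D$ to produce a selection $v(t)\to v$, yields $\langle v-\grad_Mf(x),w\rangle=0$. Since $w$ was arbitrary in $T_xM$, this gives $\mathrm{Proj}_{T_xM}v=\grad_Mf(x)$, establishing \eqref{projf}.

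The main obstacle is the tangency lemma in the forward direction: while intuitively clear, the statement that a merely absolutely continuous curve respects the Whitney stratification almost everywhere is the nontrivial geometric ingredient, and it is really what forces the use of o-minimality (finiteness of the stratification and tameness of the curve's incidence pattern). The delicate point in the reverse direction is realizing a prescribed $v\in D(x)$ as the limit at $t=0$ of a selection $v(t)\in D(\gamma(t))$; this is handled by first establishing \eqref{projf} on a dense subset of $M$ using selections obtained directly from the conservativity identity, and then extending to every $x\in M$ by graph-closedness of $D$ combined with the continuity of $\grad_Mf$ on $M$.
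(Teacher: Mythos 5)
First, note that the paper does not actually prove this theorem: it is imported wholesale from \cite{bolte2020conservative}, with the forward implication attributed to \cite{davis2018stochastic} building on \cite{bolte2007clarke}. So the comparison below is with the argument in those references rather than with an in-paper proof. Your first implication (variational stratification $\Rightarrow$ conservativity) is the standard argument and is sound in outline: the whole content is the tangency lemma for absolutely continuous curves, which you correctly isolate, and which is proved via the Lebesgue density theorem applied to the finitely many sets $\gamma^{-1}(M_i)$ together with the fact that limits of secants within a $C^1$ manifold are tangent.

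The reverse implication, however, has a genuine gap at the final step. You establish \eqref{projf} on a full-measure (hence dense) subset of each stratum $M$ via the curve/Fubini argument --- that part is fine --- and then claim to extend to every $x\in M$ ``by graph-closedness of $D$ combined with the continuity of $\grad_M f$.'' Graph-closedness is \emph{outer} semicontinuity: it tells you that every limit of selections $v_k\in D(x_k)$ lies in $D(x)$, but it does not let you realize a prescribed $v\in D(x)$ as such a limit; that would require inner semicontinuity, which is not assumed and typically fails. Concretely, take $f\equiv 0$ on $\RR^p$, $D(x)=\{0\}$ for $x\neq 0$ and $D(0)=\{0,e_1\}$: this $D$ is definable, compact-valued, graph-closed and conservative for $f$ (the anomalous value is only attainable on a set of times of measure zero along any curve), the projection formula holds on the dense set $\RR^p\setminus\{0\}$ for the trivial stratification, yet $\mathrm{Proj}_{T_0\RR^p}D(0)=\{0,e_1\}\neq\{0\}$. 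Your extension mechanism would wrongly conclude otherwise. The correct repair, and what \cite{bolte2020conservative} actually does, is to use definability a second time: the exceptional set where \eqref{projf} fails on a stratum is definable and of measure zero in that stratum, hence of strictly smaller dimension; one refines the Whitney stratification so that this set becomes a union of lower-dimensional strata (where the required formula involves a smaller tangent space) and argues by induction on stratum dimension, the zero-dimensional case being trivial. In the example above this simply makes $\{0\}$ its own stratum, where $T_0\{0\}=\{0\}$ and \eqref{projf} reads $\{0\}=\{0\}$.
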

This result allows to prove the following

\begin{proof}[of Theorem \ref{th:tensorflowConservativeField}]
				We prove that there is a $C^1$ projection formula (see Theorem \ref{th:characterizationConservativity}). For each $I \subset \left\{ 1,\ldots,m \right\}$, set $V_I = \left\{ x \in \RR^p, S(x) = I \right\}$. On each set $V_I$, $f(x) = f_i(x)$ for all $i \in I$. These sets are definable, hence, there is a definable Whitney stratification of $\RR^p$ which is compatible with them (Theorem \ref{th:wstratification}). For any $C^1$ manifold $M$ in the stratification there is an index set $I \subset \{1,\ldots,m\}$ such that for all $i \in I$ and all $x \in M$, $f(x) = f_i(x)$ and $S(x) = I$. Since each $f_i$, $i \in I$ is $C^1$ and they agree on $M$, they represent the same function when restricted to $M$. Hence they have the same differential on $M$ and since they are all globally $C^1$ this agrees with the projection of their gradient on the tangent space of $M$. Hence the projection of $D_f(x)$ to the tangent space to $M$ at $x$ is single valued and corresponds to the derivative of $f$ restricted to $M$. This is sufficient to conclude as this is precisely the variational stratification required by Theorem \ref{th:characterizationConservativity}.
\end{proof}

\section{Convergence to selection critical points}

\begin{proof}[of Theorem \ref{th:convergenceGrad}, first part]
We use here the results on conservative fields developed in \cite{bolte2020conservative}. To prove the theorem it suffices to establish that:
\begin{itemize}
    \item $D_J$ is a conservative field for $J$
    \item the number of $D_J$ critical values are finite.
\end{itemize}
The first point is Theorem \ref{th:characterizationConservativity} while the second one is the consequence of the latter and the definability of the couple $f,D_f$, see Proposition~\ref{p:art} (ii). To conclude it suffices to apply the convergence results in \cite[Theorem 9]{bolte2020conservative}.
\end{proof}

\begin{proof}[of Theorem \ref{th:trapAvoidance}, second part]
		This result is a consequence of the more general Theorem~\ref{t:trap} established in Section~\ref{sec:art}. Let $F$ be the finite set given in Theorem \ref{t:trap}, the set $$\{c \in (0,1],\, \exists k \in \NN,\, c \gamma_k \in F\},$$ is countable, and hence has zero measure. So for almost all $c \in (0,1]$, $\{c \gamma_k\}_{k \in \NN}$ does not intersect $F$. Using Theorem \ref{t:trap}, there is a zero measure set $N$ such that any initialization outside $N$ provides almost surely a subgradient sequence. By hypothesis, for almost every $x_0 \in K \setminus N$, the sequence is bounded almost surely and the result follows from Theorem \ref{t:trap}.
\end{proof}

\section{Artificial critical points}\label{sec:art}

 Being given a Lipschitz continuous function on $\R^p$ and a conservative field $D$, one has two types of $D$-critical points:
\begin{itemize} 
\item Clarke critical points: $\partial^c f(x)\ni0$, we denote the set of these points by $\crit^cf $
\item  Artificial critical points $\partial^c f(x)\not\ni0$ and $D(x)\ni 0$, we denote this set by 
$\crit^a f$
\end{itemize}

Critical values are defined accordingly as images of critical points.

\begin{proposition}[Artificial critical points]\label{p:art} Assume $f:\R^p\to \R$ and $D:\R^p\rightrightarrows\R^p$ are definable in a common o-minimal structure.  
             The connected components $C_i$  of $\crit^a f$, which are in finite number, satisfy
             \begin{itemize}
                 \item[(i)] $\dim C_i<p$
                 \item[(ii)] $f(C_i)$ is a singleton, and as a consequence the $D$ critical values of $f$ are in finite number,
                 \item[(iii)] $\crit^a f$ does not contain local minimum (nor local maximum)
             \end{itemize}
\end{proposition}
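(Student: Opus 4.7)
\textbf{Proof plan for Proposition \ref{p:art}.}

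The plan is to first establish definability of $\crit^a f$, then derive each of (i), (ii), (iii) from the variational stratification property of $(f,D)$ provided by Theorem~\ref{th:characterizationConservativity}, together with standard o-minimal geometry.

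First I would argue that $\crit^a f$ is definable in the ambient o-minimal structure: $\{x : 0 \in D(x)\}$ is definable because $D$ has definable graph, and $\{x : 0 \in \partial^c f(x)\}$ is definable because the Clarke subgradient of a definable locally Lipschitz function has definable graph. Hence $\crit^a f$ is definable and therefore has finitely many connected components $C_1,\ldots,C_N$, each of which is itself definable. Fix now such a component $C_i$. By Theorem~\ref{th:characterizationConservativity}, the pair $(f,D)$ admits a $C^1$ variational stratification $\SSS = (M_j)_{j \in I}$; by Theorem~\ref{th:wstratification} I may refine $\SSS$ so as to be compatible with $C_i$, so that $C_i$ is a finite union of (connected) strata.

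For (i), suppose for contradiction $\dim C_i = p$. Then $C_i$ contains some $p$-dimensional stratum $M$. On $M$ the tangent space is all of $\RR^p$, so the projection formula reads $D(x) = \{\grad f(x)\} = \{\nabla f(x)\}$ for $x \in M$. Since $x \in \crit^a f$ implies $0 \in D(x)$, one gets $\nabla f(x) = 0$ and hence $0 \in \partial^c f(x)$, contradicting $x \in \crit^a f$. Thus every connected component has dimension strictly less than $p$.

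For (ii), take any stratum $M \subset C_i$. The projection formula gives $\mathrm{Proj}_{T_x M}(D(x)) = \{\grad_M f(x)\}$; since $0 \in D(x)$ projects to $0$, one obtains $\grad_M f(x) = 0$ for every $x \in M$. As $M$ is connected this yields $f|_M \equiv c_M$ for some constant. To propagate this across strata, I would use the Whitney frontier condition: if $M_1, M_2$ are strata with $M_2 \subset \cl M_1$, then picking $x_n \in M_1$ with $x_n \to x \in M_2$ and using continuity of $f$ gives $c_{M_1} = f(x_n) \to f(x) = c_{M_2}$. Since $C_i$ is connected and a finite union of strata, walking along a path in $C_i$ and using the incidence relation on the strata it crosses, all constants $c_M$ coincide. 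Hence $f(C_i)$ is a singleton; there being finitely many $C_i$, the set of $D$-critical values is finite.

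For (iii), if $\bar x$ were a local minimum (resp.\ maximum) of $f$, then by the standard necessary condition for locally Lipschitz functions one has $0 \in \partial^c f(\bar x)$, directly contradicting $\bar x \in \crit^a f$. I expect the main obstacle to be the careful handling of the stratification argument in (ii), specifically the passage from ``constant on each stratum'' to ``constant on the whole connected component'' via the Whitney frontier relation; every other step is a direct consequence of definability and of the projection formula already established in Theorem~\ref{th:characterizationConservativity}.
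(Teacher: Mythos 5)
Your proposal is correct in substance but takes a genuinely different route from the paper. For (i) and (ii) the paper works directly with the \emph{integral} (conservativity) property: a full-dimensional component would contain a ball on which $0\in D$ everywhere, so by \eqref{eq:conservativity} $f$ is constant on that ball, every point of the ball is a local minimum, hence Clarke critical --- contradiction; and constancy of $f$ on each $C_i$ is obtained by integrating along piecewise $C^1$ definable arcs joining points of $C_i$, again using that $0\in D$ along the way. You instead pull everything back to the variational stratification of Theorem~\ref{th:characterizationConservativity}: an open stratum forces $D=\{\nabla f\}$ there (killing full-dimensional components), and $\grad_M f\equiv 0$ on each stratum plus the frontier condition and continuity of $f$ propagate the constant across the finitely many strata of a connected component. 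Both arguments are legitimate; the paper's is shorter and avoids the stratum-to-stratum bookkeeping (which is the delicate part of your plan, as you note), while yours avoids having to argue that constancy on a ball yields Clarke criticality and makes the mechanism ($0$ projects to the Riemannian gradient on the active stratum) more explicit.

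One genuine omission: in (ii) you conclude that ``the set of $D$-critical values is finite'' from the finiteness of the components of $\crit^a f$ alone. But the $D$-critical points split into $\crit^a f$ and the Clarke critical points with $0\in D$, and your argument only bounds the values taken on the former. To finish you must also invoke the finiteness of the \emph{Clarke} critical values of a definable locally Lipschitz function (the definable Morse--Sard theorem of \cite{bolte2007clarke}), which is exactly the extra ingredient the paper's proof cites at this point. With that added, your argument is complete; (iii) is identical in both proofs.
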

\begin{proof} By definability of $\crit^a f$, the number of connected components is finite.

If $C_i$ had full dimension it would contain a non trivial ball on which $f$ should be constant by the integral property. This would in turn imply that the points in the ball would also be local minimum and thus Clarke critical, which is impossible.

To see that the critical values are in finite number it suffices to evoke the fact that Clarke critical values are finite \cite{bolte2007clarke} and use that artificial critical values are in finite number.

By definability the connected components are arcwise-connected with piecewise $C^1$ paths. Using the integral property this shows $f$ is constant on $C_i$. 

(iii) is obvious since local minimum or maximum are Clarke critical.
\end{proof}

As explained in the introduction, artificial critical points are ``computing artefacts", whence their names. For algorithmic differentiation the ``gradient" provided by a program is zero while the point might even be a smooth non critical point. 
We consider the setting of the mini-batch algorithm of the last section. 

\begin{theorem} \label{t:trap}
Assume that each $f_1,\ldots,f_n$ belongs to $\SSS$. 
There exists a finite subset of steps $F\subset (0,+\infty)$ and a zero measure meager subset $N$ of $\R^p$, such that for any positive sequence $\gamma_k=o(1/\log k)$ avoiding values in $F$, and any almost surely bounded sequence with initial condition in $\R^p\setminus N$, we have 
\begin{itemize}
\item $J(x^k)$ converges towards a Clarke critical value almost surely,
\item the cluster points of $x^k$ are Clarke critical point almost surely.
\end{itemize}
\end{theorem}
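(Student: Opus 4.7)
The plan is to combine the first part of Theorem~\ref{th:convergenceGrad}, already established, with a trap-avoidance argument in the spirit of \cite{bianchi2020convergence}. By the first part, under the stated step-size and boundedness assumptions, almost surely $J(x_k)$ converges and every cluster point $\bar{x}$ satisfies $0 \in D_J(\bar{x})$, hence $\bar{x}\in \crit^c J \cup \crit^a J$. By Proposition~\ref{p:art}, the artificial critical set decomposes as $\crit^a J = \bigcup_{i=1}^{m} C_i$, a finite union of definable connected components, each of dimension strictly less than $p$, with $J$ constant on each $C_i$ and producing only finitely many artificial critical values. It therefore suffices to show that, for $c$ outside a finite set $F$ and $x_0$ outside a meager null set $N$, almost surely no cluster point of $(x_k)$ lies in $\bigcup_i C_i$; the announced conclusion then follows since the remaining cluster points are Clarke critical and the unique limit of $J(x_k)$ must therefore be a Clarke critical value.

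I would proceed component by component. Fix $C_i$ and consider the iteration map $\Phi_{c,k,I}(x) = x - c\alpha_k\, \selgrad f_I(x)$, which is piecewise smooth and definable in $\mathcal{O}$, just like the sets $C_i$. The central claim is that the set $W_i$ of initial conditions whose trajectory can cluster at $C_i$ with positive probability is both Lebesgue null and meager, for all but finitely many $c$. The argument relies on a definable Sard / generic-preimage principle available in $\mathcal{O}$: preimages of a definable set of dimension strictly less than $p$ under a definable piecewise $C^1$ map remain of dimension strictly less than $p$ outside a finite set of degenerate parameter values of $c$. Combining this across the countably many iteration indices $k$ and the finitely many mini-batch realizations preserves meagerness and measure zero, and the finite set $F$ absorbs the degenerate $c$ values thus produced.

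A crucial ingredient is the randomness of the indices $I_k$. At each step every admissible index set occurs with positive probability, so any trajectory converging to $C_i$ must survive infinitely many random branchings. Using the projection formula in Theorem~\ref{th:characterizationConservativity}, on each stratum $M \subset C_i$ the projection of every element of $D_J$ onto $T_xM$ vanishes, yet individual selection gradients $\selgrad f_I$ generically have a nonzero normal component at points of $C_i$. A coupling argument with the positive probability of sampling an ``escaping'' index set at each step then forces the probability of remaining in a shrinking neighborhood of $C_i$ to vanish, unless the trajectory lies on a measure-zero stable manifold pre-selected by the initial condition; union over $i$ gives~$N$.

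The main obstacle is to carry out this definable Sard/transversality analysis rigorously in the piecewise-smooth selection setting: $\selgrad f_I$ depends on a representation and may jump across selection boundaries, each component $C_i$ is itself a stratified definable set, and the stratifications of the $C_i$ and of the selection domains of the $f_I$ must be refined compatibly. Keeping these controls uniform across iterations — while $\gamma_k \to 0$ slowly enough for the stochastic approximation viewpoint from \cite{bolte2020conservative} to apply, yet quickly enough to prevent accumulation phenomena — is the delicate step; the o-minimal structure of Section~\ref{sec:ominimal}, the projection formula, and the positive-probability mini-batch randomness together do the heavy lifting.
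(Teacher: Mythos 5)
There is a genuine gap in your argument, and it concerns the central mechanism. Your plan is to start from the first part of Theorem~\ref{th:convergenceGrad} and then exclude cluster points lying in $\crit^a J$ by (i) a definable Sard/preimage argument showing that certain sets of initial conditions are thin, and (ii) a noise-driven escape argument near each component $C_i$. The difficulty is that the preimage argument, which is sound, only controls \emph{finite-time membership}: it shows that for all but finitely many $c$ and outside a meager null set of initial conditions, no iterate $x_k$ ever lands in a prescribed definable set of dimension $<p$. It says nothing about the trajectory \emph{clustering} at such a set, since a sequence can accumulate on $C_i$ without ever touching it. You are aware of this and try to close the gap with the coupling/escape argument (``individual selection gradients generically have a nonzero normal component at points of $C_i$\,\dots\,forces the probability of remaining in a shrinking neighborhood to vanish''), but this claim is neither justified nor obviously true, and making it rigorous would amount to proving a nontrivial trap-avoidance theorem of the type in \cite{bianchi2020convergence} --- precisely the hard part you cannot leave as a sketch.

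The paper avoids this entirely by choosing a different target set for the preimage argument: not $\crit^a J$, but the set $C\cup\Sing$ where some $\selgrad f_I$ differs from $\nabla f_I$ or some $f_I$ fails to be differentiable. This is a \emph{per-iterate} condition, so the finite-time preimage argument (Claim~\ref{cl:nonflat}, applied to the smooth bricks via $\Phi^k_{I,j}=Id-\gamma_k\nabla f_{I,j}$, with the finite exceptional set $F$ of steps coming from a definable Sard argument on the Hessian eigenvalues) suffices: outside $N$ the trajectory never meets $C\cup\Sing$, hence at every iterate $\mathbb{E}_I[\selgrad f_{I}(x_k)]=\nabla J(x_k)\in\partial^c J(x_k)$ and the recursion is verbatim a stochastic subgradient sequence for $J$. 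Clarke criticality of all cluster points and convergence of $J(x_k)$ to a Clarke critical value then follow directly from \cite{davis2018stochastic,benaim2005stochastic}, with no analysis of the dynamics near $\crit^a J$ and no use of the projection formula or of an escaping-index argument. In short: the missing idea is that one should prevent the algorithm from ever \emph{seeing} a spurious gradient, rather than prevent it from \emph{converging} to a spurious critical point.
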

\begin{proof} The proof is twofold. We first prove that the set of initial conditions leading to an artificial critical point or more generally to a non differentiability point within a finite time is ``small". We then use this fact to conclude.
\begin{claim}\label{cl:nonflat} Let $g:\R^p\to \R$ be a definable differentiable function. Set, for $\lambda>0$, 
$$\Phi_{\lambda}=\lambda Id-\nabla g,$$
where $Id$ denotes the identity. There exists a finite set $F$ in $(0,+\infty)$ such that, 
\begin{equation}\forall \lambda \in (0,+\infty)\setminus F, \:\forall Z\subset \R^p \mbox{ definable }, \dim Z<p \Rightarrow  \dim \Phi_{\lambda}^{-1}(Z)<p.\label{e:dim}\end{equation}
\end{claim}
Proof of the claim. Denote by $L$ the set of points where $g$ is twice differentiable so that $L$ is dense and definable. Denote by $\lambda_1,\ldots,\lambda_p:L\to\R$ a representation of the eigenvalues of $\nabla^2g$. Refine $L$ to be contained in the common domain of differentiability for each $\lambda_i$, $L$ remains open and dense. By the definable Sard's theorem the critical values of each function $\lambda_i$ is finite, so that the set of all these values which we denote by $F$ is itself finite. 

Take a positive real $\lambda\notin F$ and consider the set $$K_\lambda:=\{x\in L:\Phi'_{\lambda}(x)=\lambda Id-\nabla^2g(x) \mbox{ is not invertible}\}.$$ By diagonalization, we see that the determinant of $\Phi'_{\lambda}(x)$ is $\displaystyle \prod_{i=1}^d (\lambda-\lambda_i(x))$ for any $x$, thence 
$$ K_{\lambda}\subset \bigcup_{i=1}^m\{x \in L,\, \lambda_i(x)=\lambda\}.$$
Since $\lambda$ is a regular value for each $\lambda_i$ the previous set is a finite union of manifolds of dimension $p-1$, see e.g., \cite{Cos99}. This implies that the set $\R^p\setminus K_{\lambda}=\left\{x\in L: \Phi'_{\lambda}(x) \mbox{ is invertible }\right\}$ 
is dense. 
Using the above, we deduce that there exists finitely many open connected subsets $U_1,\ldots,U_r\subset L$ of $\R^p\setminus K_{\lambda}$ such that $\displaystyle U_1\cup\ldots\cup U_r$ is dense in $L$ and thus in $\R^p$. Take now $Z\subset \R^p$ definable with $\dim Z<p$. Assume towards a contradiction that there exists a nonempty open ball $B$ in $\Phi_{\lambda}^{-1}(Z)$. In that case $B$ must have a nonempty intersection with some $U_{i_0}$. The set $\Phi_{\lambda}(B\cap U_{i_0})$ is open because $\Phi_{\lambda}$ is a diffeomorphism on $U_i$ on its image. Since on the other hand we have $\Phi_{\lambda}(B\cap U_{i_0})\subset Z$, we have a contradiction and the claim is proved.$\hfill\Box$

For each $I\subset \{1,\ldots,n\}$, we denote by $f_{I,1},\ldots,f_{I,m_I}$ the bricks attached to $f_I$ where $m_I\geq 1$. Denote by $\Sing$  the set of points on which at least one $f_I$ is non differentiable and $C$ the set of points for which  $\selgrad f_I \neq \nabla f_I$ for at least one $I$. By Proposition \ref{prop:SGradientAE} and definability, $\Sing$ and $C$ are  finite unions of manifolds of dimension at most $p-1$. 

Set $\Phi^k_{I,j}=Id-\gamma_k\nabla f_{I,j}$, with $I\subset\{1,\ldots,m\}$, $j\in \{1,\ldots,m_I\}$ and $Id$ denotes the identity. 
Applying Claim \ref{cl:nonflat}, we can find a finite set $F$ for which $\gamma_k\notin F$ implies that each $\Phi^k_{I,j}$ has the property \eqref{e:dim}. Indeed, for each $I\subset\{1,\ldots,m\}$, $j\in \{1,\ldots,m_I\}$, there is $F_{I,j} \subset \RR$ finite such that $f_{I,j}$ has property \eqref{e:dim}. Since the subsets $I$ are in finite number and each $m_I$ is finite, the set $F = \bigcup_{I \subset \{1,\ldots,m\}} \bigcup_{j\in \{1,\ldots,m_I\}} F_{I,j}$, is also finite. For each $k \in \NN$, $I\subset\{1,\ldots,m\}$, $j\in \{1,\ldots,m_I\}$. Remark that if $\gamma_k \not \in F$ then $\Phi^k_{I,j}$ has property \eqref{e:dim}.

For $k\leq k_0$ fixed, let us consider the finite set of definable mappings defined by $$\Psi_{k_0}:=\left\{\prod_{j=1}^k\Phi^j_{I_j,i_j}: k\leq k_0, I_j\subset \{1,\ldots,n\}, i_j\in\{1,\ldots,m_{I_j}\} \right\}.$$
We now assume that $\gamma_k \notin F, \forall k\geq 0,$ so that each mapping in $\Psi_{k_0}$ has the property $\eqref{e:dim}$
and  $$N_{k_0}:=\left\{x\in \R^p: \exists k\leq k_0, \exists \Phi \in \Psi_{k}(x)\in C\cup\Sing \right\}$$
These are initial conditions in $U$ leading to an artificial critical or a non-differentiability point within $U$ before time $k_0$. 

We can also write   
$$N_{k_0}\subset \bigcup_{\Phi\in\Psi_{k_0}} \Phi^{-1}\left(C\cup\Sing\right).$$
From stratification arguments we know that $\Sing$ has a dimension lower than $p-1$. On the other hand, $C$ has dimension strictly lower than $p$ by Proposition \ref{prop:SGradientAE}. Claim~\ref{cl:nonflat} applies and yields $\dim \Phi^{-1}\left(C\cup\Sing\right)<p$ for all $\Phi\in \Phi_{k_0}$. As a consequence $N_{k_0}$ is closed with nonempty interior and so does $\displaystyle N:=\cup_{k\in\NN}N_k$ by Baire's theorem. Similarly $N$ has zero measure as a countable union of zero measure sets.

This proves that any sequence with initial condition out of $N$ must remain in the zone of differentiability of $J$ as well as all $f_I$. In particular if $I$ is taken uniformly at random among possible subsets, for all $x \not \in N$, we have $\mathbb{E}_I[\selgrad f_I(x)] = \selgrad J(x) = \nabla J(x) = \partial^c J(x)$, so that these specific sequences can also be seen as stochastic subgradient sequences for $J$. To be more specific, the sequence $x_k$ can be seen as one of the sequence generated by the algorithm
\begin{align*}
    y_{k+1} \in y_k - \gamma_k \partial^c J(y_k) +\epsilon_k
\end{align*}
where $\epsilon_k$ is a random noise with zero mean. Using general results \cite{davis2018stochastic,benaim2005stochastic}, we know that $y_k$ sequences, when bounded almost surely, have limit points which are Clarke critical.
\end{proof}
\end{document}